\newtheorem{definition}{Definition}
\newtheorem{theorem}{Theorem}
\newtheorem{proposition}{Proposition}
\def\eqref#1{equation~\ref{#1}}
\def\1{\bm{1}}
\def\rb{{\textnormal{b}}}
\DeclareMathAlphabet{\mathsfit}{\encodingdefault}{\sfdefault}{m}{sl}
\SetMathAlphabet{\mathsfit}{bold}{\encodingdefault}{\sfdefault}{bx}{n}
\def\gA{{\mathcal{A}}}
\def\gD{{\mathcal{D}}}
\def\gL{{\mathcal{L}}}
\def\gM{{\mathcal{M}}}
\def\gP{{\mathcal{P}}}
\def\gR{{\mathcal{R}}}
\def\gS{{\mathcal{S}}}
\def\gX{{\mathcal{X}}}
\def\sR{{\mathbb{R}}}
\newcommand{\E}{\mathbb{E}}
\title{Learning the Minimum Action Distance}
\author{
Lorenzo Steccanella$^{1}$\thanks{Corresponding author: \texttt{lorenzo.steccanella.w@gmail.com}},
Joshua~B.~Evans$^2$,
Özgür Şimşek$^2$,
Anders Jonsson$^1$\\[0.5em]
$^1$Department of Information and Communication Technologies,\\
Universitat Pompeu Fabra, Barcelona, Spain\\[0.3em]
$^2$Department of Computer Science,\\ 
University of Bath, Bath, United Kingdom
}
\begin{document}

\maketitle

\begin{abstract}
  This paper presents a state representation framework for Markov decision processes (MDPs) that can be learned solely from state trajectories, requiring neither reward signals nor the actions executed by the agent. We propose learning the \textit{minimum action distance} (MAD), defined as the minimum number of actions required to transition between states, as a fundamental metric that captures the underlying structure of an environment. The MAD naturally enables critical downstream tasks such as goal-conditioned reinforcement learning and reward shaping by providing a dense, geometrically meaningful measure of progress.
  Our self-supervised learning approach constructs an embedding space where the distances between embedded state pairs correspond to their MAD, accommodating both symmetric and asymmetric approximations.
  We evaluate the framework on a comprehensive suite of environments with known MAD values, encompassing both deterministic and stochastic transition dynamics, discrete and continuous state spaces, and environments with noisy observations.
  Empirical results show that the proposed approach learns MAD representations more efficiently than existing methods, produces more accurate estimates of the true MAD, and improves performance on downstream goal-reaching tasks.
\end{abstract}


\section{Introduction}
\label{sec:introduction}

In reinforcement learning~\citep{sutton1998reinforcement}, an agent learns useful behaviors through continuing interaction with its environment. By observing the outcomes of its actions, a reinforcement learning agent learns over time how to select actions in a way that maximizes the expected cumulative reward it receives from its environment.
A central challenge in reinforcement learning is giving agents the ability to generalize. An agent should be able to adapt quickly not only to previously unseen states, but also to variations of its environment that it has not previously observed.

One way to support this kind of generalization is to learn a metric that captures the similarity between two states in the environment.
A useful state similarity metric can, for example, allow experience gathered in one region of the environment to inform behavior in another, support abstractions that group together equivalent or similar states,
and enable transfer when the agent's task or the environment itself changes while preserving some of its underlying structure. Such metrics can also serve as heuristics in goal-conditioned reinforcement learning, where an agent must adapt quickly to achieve different goals in its environment.

The Minimum Action Distance (MAD) has proved useful as a state similarity metric, with impressive applications in several areas of reinforcement learning, including policy learning \citep{wang2023optimal, park2023hiql}, reward shaping \citep{steccanella2022state}, and option discovery \citep{park2024metra,park2024foundationpolicieshilbertrepresentations}.
While prior work has demonstrated the benefits of using the MAD, how best to approximate it remains an open problem. Existing methods have not been systematically evaluated on their ability to accurately approximate the MAD, and many rely on symmetric approximations, even though the true MAD is inherently asymmetric.

In this work, we propose an offline state representation learning framework that captures directed distances between states, without requiring action or reward information. By learning purely from sequences of state observations and focusing on global reachability rather than local dynamics, this approach is inherently robust to environmental stochasticity and remains agnostic to the data-collection policy. This allows for training on datasets of any quality, including purely random exploration, expert demonstrations, or fragmented trajectories that must be stitched together.

Figure~\ref{fig:mad_representation} illustrates the steps of MAD representation learning: an agent collects state trajectories from an unknown environment, which are used to learn a state embedding that implicitly defines a distance function between states.

We make three main contributions towards fast, accurate approximation of the MAD. First, we propose two novel offline algorithms for learning MAD using only state trajectories collected by an agent interacting with its environment. Unlike previous work, the proposed algorithms naturally support both symmetric and asymmetric distances.
Secondly, we define a novel quasimetric distance function that is computationally efficient and that, despite its simplicity, outperforms more elaborate quasimetrics in the existing literature.
Finally, we introduce a diverse suite of environments --- including those with discrete and continuous state spaces, stochastic and deterministic dynamics, and directed and undirected transitions --- in which the ground-truth MAD is known, enabling a systematic and controlled evaluation of different MAD approximation methods.

\begin{figure*}[t]
    \centering
    \includesvg[width=0.9\linewidth]{images/final_results/mad_representation.svg}
    \vspace*{-30pt}
    \caption{Schematic overview of MAD representation learning. From left to right: (1) the hidden environment graph, (2) trajectories collected by an unknown policy, (3) the embedding function $\phi:S \rightarrow \mathbb{R}^2$ and (4) the resulting MAD embedding space in $\mathbb{R}^2$. }
    \label{fig:mad_representation}
\end{figure*}

\section{Related Work}
\label{sec:related work}

In applications such as goal-conditioned reinforcement learning~\citep{ghosh2020learning} and stochastic shortest-path problems~\citep{tarbouriech2021ssp}, the temporal distance is measured as the expected number of steps required to reach one state from another state under some policy. In contrast, the MAD is a lower bound on the number of steps based solely on the support of the transition function. This distinction makes the MAD efficient to compute and robust to changes in the transition probabilities as long as the support over next states remains the same, making it suitable for representation learning and transfer learning.

Prior work has explored the connection between the MAD and optimal goal-conditioned value functions~\citep{kaelbling1993temporaldist}.
\citet{park2023hiql} highlight this connection and propose a hierarchical approach that improves distance estimates over long horizons, and \citet{park2024metra} embed states into a learned latent space where the distance between embedded states directly reflects an on-policy measure of the temporal distance~\citep{hartikainen2020}. \citet{park2024foundationpolicieshilbertrepresentations} and \citet{ma2022vip} extend this idea to the offline setting, learning embeddings from arbitrary experience such that Euclidean distances between state embeddings approximate the MAD.
As an alternative to approximating the MAD using goal-conditioned value functions, \citet{steccanella2022state} formulate learning a state embedding in which distances approximate the MAD as a constrained optimization problem, where bounds on the distance between embedded states are derived from state trajectory data.
Although their formulations differ, these approaches ultimately seek to learn the same underlying quantity: the minimum number of actions required to move between two states.

These existing approaches share a common limitation: they rely on symmetric distance metrics such as the Euclidean distance between state embeddings to approximate the MAD. As such, they cannot capture the asymmetry of the true MAD in environments with irreversible dynamics. In contrast, the approach we develop here supports the use of asymmetric distance metrics (or, \textit{quasimetrics}), which can better capture the directional structure in many environments.

Some prior work has already explored the use of quasimetrics in reinforcement learning. \citet{wang2023optimal} learn an asymmetric distance function that approximates the MAD by preserving local structure while maintaining global distances. Their method differs from the one we propose in two ways. First, their method does not leverage observed distances along state trajectories as supervision for the learning process. Secondly, they use the Interval Quasimetric Embedding~(IQE)~\citep{wang2022improved} to learn the distance function. \citet{dadashi2021offline} and \citet{agarwal2021} learn embeddings and define a pseudometric between states as the Euclidean distance between their embeddings. Unlike our work, they use objective functions inspired by bisimulation to learn both state and state-action embeddings.

Successor features~\citep{dayan1993sr, barreto2017sf}, and time-contrastive representations~\citep{ eysenbach2022contrastive} have also been used to define notions of temporal distance. \citet{myers2024tcsf} introduces time-contrastive successor features, defining a distance metric based on the difference between discounted future occupancies of state features learned via time-contrastive learning. While their metric satisfies the triangle inequality and naturally handles both stochasticity and asymmetry, the resulting distances reflect expected discounted state visitations under a specific behavior policy and lack an intuitive interpretation. In contrast, approaches that approximate the MAD are naturally interpretable as a lower bound on the number of actions needed to transition between two states.

Laplacian-based representation learning methods~\citep{wu2018, Machado2019, wang2021, wang2023} learn embeddings from the spectral structure of random walks over the transition graph, producing representations that reflect global connectivity in the state space. However, these methods are typically defined on a symmetrized transition operator or undirected Laplacian, and the induced geometry measures diffusion-based similarity rather than directed reachability. As a result, distances in these embeddings are fundamentally symmetric and do not correspond to the minimum number of actions required to move between two states, making them poorly suited to environments with irreversible or asymmetric dynamics.

\section{Background}
\label{sec:background}

In this section, we introduce the notation and concepts used throughout the paper. Given a finite set $\gX$, we use $\Delta(\gX)=\{p\in\sR^\gX \mid \sum_x p_x=1, p_x\geq 0 \, (\forall x)\}$ to denote the probability simplex (i.e.~the set of all probability distributions over $\gX$). A rectified linear unit (ReLU) is a function $\operatorname{relu}:\sR^d \to \sR_{\geq0}^d$ defined on any vector $x\in\sR^d$ as $\operatorname{relu}(x)=[\max(0,x_i)]_{i=1}^d$.

\textbf{Markov Decision Processes (MDPs).} An MDP~\citep{bellman1957} is a tuple $\gM=\langle \gS,\gA,\gR,\gP, \gD, \gamma\rangle$, where $\gS$ is the state space, $\gA$ is the action space, $\gR:\gS\times\gA\to\sR$ is the reward function, $\gP:\gS\times\gA\to\Delta(\gS)$ is the transition kernel, $\gD\in \Delta(\gS)$ is the initial state distribution, and $\gamma\in[0,1]$ is the discount factor. At each time $t$, the learning agent observes a state $s_t\in\gS$, selects an action $a_t\in\gA$, receives a reward $r_t=\gR(s_t,a_t)$ and transitions to a new state $s_{t+1}\sim\gP(s_t,a_t)$. The learning agent selects actions using a policy $\pi:\gS\to\Delta(\gA)$, a mapping from states to probability distributions over actions. In our work, the state space $\gS$ can be either discrete or continuous.

We work in the standard \emph{discrete-time} MDP setting, where each transition corresponds to exactly one decision step and therefore has unit cost. Consequently, the Minimum Action Distance represents the minimum number of actions needed to move between states. This interpretation is natural for the benchmarks considered in this paper, even when the underlying state space is continuous, because the agent still interacts with the environment at discrete decision times.

The proposed framework could also be extended beyond this setting. In a semi-Markov decision process (SMDP) \cite{sutton1999between}, an action may persist for a variable duration, so different transitions need not consume the same amount of time. In a continuous-time control problem, time evolves continuously rather than in unit decision steps. In either case, one would replace the unit-cost notion used here with a duration or elapsed-time cost associated with each transition. We do not consider these extensions in this paper, but making the distinction explicit is important because our notion of MAD is defined with respect to discrete decision steps, not physical time.

\textbf{Reinforcement learning (RL).} RL~\citep{sutton2018} algorithms aim to learn a policy $\pi$ that maximizes some measure of expected future reward. In this paper, we consider the problem of representation learning, and so are not directly concerned with the problem of learning a policy. Concretely, we wish to learn a distance function between pairs of states that can later be used by an RL agent to learn more efficiently. In this setting, we assume that the learning agent uses a behavior policy $\pi_\rb$ to collect trajectories. Since we are interested in learning a distance function over state pairs, actions are relevant only for determining possible transitions between states, and rewards are not relevant. Hence, for our purposes, a trajectory $\tau=(s_0,s_1,\ldots,s_n)$ is simply a sequence of states.

\section{The Minimum Action Distance}
\label{sec:mad}

Given an MDP $\gM=\langle \gS,\gA,\gR,\gP,\gD,\gamma\rangle$ and a state pair $(s,s')\in\gS^2$, the Minimum Action Distance, $d_{\text{MAD}}(s,s')$, is defined as the minimum number of decision steps needed to transition from $s$ to $s'$.

\begin{definition}[Minimum Action Distance]
For a fixed policy $\pi$, define the minimum hitting time from state $s$ to state $s'$ as
\[
\scalebox{0.95}{$
T_{s \to s'}^{\pi}
=
\min
\left\{
t\in\mathbb{N}\cup\{\infty\}
:
\mathbb{P}(S_t=s' \mid S_0=s,\pi)>0
\right\}
.$}
\]

That is, $T_{s \to s'}^{\pi}$ denotes the earliest decision step at which state $s'$ is reachable from $s$ with non-zero probability under policy $\pi$.

The Minimum Action Distance is defined as
\[
d_{\text{MAD}}(s,s')
=
\inf_{\pi}
T_{s\to s'}^{\pi}.
\]

\end{definition}

In deterministic MDPs, the MAD is always realizable using an appropriate policy; in stochastic MDPs, the MAD is a lower bound on the actual number of decision steps required by any policy. This makes the MAD robust to environmental stochasticity, as its value depends only on the support of the transition kernel $\mathcal{P}$.

Let $ R \subseteq \mathcal{S} \times \mathcal{S} $
be the one-step reachability relation defined by $ (s,s') \in R \quad \Longleftrightarrow \quad \exists a \in \mathcal{A} \;\text{such that}\; \mathcal{P}(s' \mid s,a) > 0$.
That is, $R$ contains all state pairs $(s,s')$ such that $s'$ is reachable from $s$ in a single decision step. In a standard discrete-time MDP, each transition represents an atomic decision step, which we treat as having a unit cost of $1$.

While we focus on discrete-time MDPs where each action takes exactly one step, this framework could be extended to Semi-Markov Decision Processes (SMDPs) or continuous-time systems by replacing the unit cost with a duration function $\Delta t(s, a)$.

We can characterize $d_{\text{MAD}}$ as the unique solution to the following constrained optimization problem:
\begin{align}
d_{\text{MAD}} &= \arg\max_{d} \sum_{(s,s') \in \mathcal{S}^2} d(s,s'), \label{eq:mad} \\
\text{s.t.} \quad & d(s,s) = 0 \quad \forall s \in \mathcal{S}, \tag{Identity} \\
& d(s,s') \le 1 \quad \forall (s,s') \in R, \tag{One-Step} \\
& d(s,s') \le d(s,s'') + d(s'',s') \quad \forall s,s',s'' \in \mathcal{S}. \tag{Triangle Ineq.}
\end{align}

The \textit{One-Step} constraint enforces that the distance between states reachable in a single action is at most $1$. Because the objective is to maximize the sum of all distances, the optimal solution $d^*$ will push these values to the highest possible value allowed by the constraints. Specifically, for any $(s,s') \in R$ where $s \neq s'$, the optimal distance is exactly $d^*(s,s') = 1$. The \textit{Triangle Inequality} then ensures these unit costs are propagated globally, such that $d^*(s,s')$ matches the length of the shortest sequence of actions connecting the two states.

\begin{theorem}[Pointwise Maximality]
\label{thm:mad-uniqueness}
The Minimum Action Distance $d_{\text{MAD}}$ is the unique pointwise maximal function satisfying the constraints in \eqref{eq:mad}. That is, for any feasible distance function $d$, $d(s,s') \le d_{\text{MAD}}(s,s')$ for all $(s,s') \in \mathcal{S}^2$.
\end{theorem}

The proof (see Appendix \ref{app:uniqueness}) relies on the fact that any path of length $k$ implies $d(s,s') \le k$ via the triangle inequality; since $d_{\text{MAD}}$ is defined as the minimum such $k$, it represents the maximum possible value any consistent distance function can take.

If the state space $\mathcal{S}$ is finite, the constrained optimization problem is precisely the linear programming formulation of the all-pairs shortest-path problem for the directed graph $(\mathcal{S},R)$ with unit edge costs.
This graph can be viewed as a determinization of the MDP $\mathcal{M}$~\citep{yoon2007}. In this case, $d_{\text{MAD}}$ can be computed exactly using the Floyd--Warshall algorithm~\citep{floyd,warshall}.

If the state space $\mathcal{S}$ is continuous, the relation $R$ remains well-defined, and the optimization problem still has a well-defined solution. However, the states can no
longer be enumerated explicitly, and solving \eqref{eq:mad} directly becomes computationally infeasible.
Moreover, the triangle inequality constraint introduces a number of inequalities that grows cubically with the size of the state space, making the formulation unsuitable as a practical learning objective.

To obtain a tractable formulation, we replace the search over arbitrary functions $ d: \mathcal{S} \times \mathcal{S} \rightarrow \mathbb{R}_{\ge 0} $ with a parametrized family of distance functions defined through a learned state representation.

Specifically, we introduce a state embedding function $ \phi_\theta: \mathcal{S} \rightarrow \mathbb{R}^k $
parametrized by $\theta$, and define the distance between two states as $ d_\theta(s,s') = d_q \big( \phi_\theta(s), \phi_\theta(s') \big), $ where $d_q$ is a quasimetric on $\mathbb{R}^k$ (see Appendix \ref{app:full_derivation}).

A quasimetric is a function that satisfies identity, non-negativity, and the triangle inequality, but does not
require symmetry (see Section \ref{sec:assym_dist}). By defining distances through such a function, these structural properties are enforced by
construction. Consequently, once we restrict the search space to functions of the form
\[
d_\theta(s,s') = d_q(\phi_\theta(s),\phi_\theta(s')),
\]
the identity and triangle inequality constraints in~\eqref{eq:mad} are satisfied automatically, and the original optimization problem simplifies to
\begin{align}
\max_{\theta} \quad & \sum_{(s,s') \in \mathcal{S}^2} d_q(\phi_\theta(s),\phi_\theta(s')) \label{eq:mad_simplified} \\
\text{s.t.} \quad & d_q(\phi_\theta(s),\phi_\theta(s')) \le 1 \quad \forall (s,s') \in R. \tag{One-Step}
\end{align}
This is still an idealized problem, because the one-step transition relation $R$ is typically unknown in offline data.

The remaining information needed to estimate $d_{\text{MAD}}$ comes from observed trajectories.
Let $\tau = (s_0, s_1, \dots, s_T)$
be a trajectory generated by some behavior policy.
For any indices $ 0 \le i < j \le T$, the difference $j-i$ is an upper bound on the Minimum Action Distance between the corresponding states: $d_{\text{MAD}}(s_i,s_j) \le j-i$, since $s_j$ is reachable from $s_i$ along the trajectory in $j-i$ decision steps.

These trajectory-derived upper bounds provide the only constraints that can be observed directly from data when the transition relation $R$ is unknown. Our learning objectives can therefore be viewed as data-driven relaxations of the simplified constrained optimization problem in~\eqref{eq:mad_simplified}, where the unobserved one-step constraints over $R$ are replaced by upper-bound constraints extracted from trajectories.

\citet{steccanella2022state} learn a parameterized state embedding $\phi_\theta: \gS \to \sR^d$ and define a distance function $d_\theta(s,s')=d(\phi_\theta(s),\phi_\theta(s'))$, where $d$ is any distance metric in Cartesian space. The parameter vector $\theta$ of the state embedding is learned by minimizing the loss function
\begin{equation}
\begin{aligned}
\gL
= {} & \E_{\tau\sim\gD,(s_i,s_j)\sim \tau} \Big[
(d_\theta(s_i,s_j) - (j-i))^2 + w_c \cdot \operatorname{relu}
\big(d_\theta(s_i,s_j) - (j-i)\big)^2
\Big],
\end{aligned}
\label{eq:unconstrained}
\end{equation}
where $w_c>0$ is a regularization factor that multiplies a penalty term which substitutes the upper bound constraints $d_\theta(s_i,s_j) \leq j-i$. 
If the distance metric $d$ satisfies the triangle inequality (e.g.~any norm $d=||\cdot||_p$) then the constraints $d_\theta(s,s)=0$ and the triangle inequality automatically hold.
Enforcing the constraint $d_\theta(s_i,s_j) \leq j-i$ for each state pair $(s_i,s_j)$ on trajectories, rather than only consecutive pairs, helps learn better distance estimates, at the cost of a larger number of constraints.

\section{Asymmetric Distance Metrics}
\label{sec:assym_dist}
A limitation of previous work is that the chosen distance metric $d$ is symmetric, while the MAD $d_{\text{MAD}}$ may not be symmetric. In this section, we review several asymmetric distance metrics. Concretely, a quasimetric is a function $d_q: \sR^d \times \sR^d \to \sR_+$ that satisfies the following three conditions:
\begin{itemize}[leftmargin=0.5cm]
    \item $\mathbf{Q1}$ (Identity): $d_q(x,x) = 0$.
    \item $\mathbf{Q2}$ (Non-negativity): $d_q(x,y) \geq 0$.
    \item $\mathbf{Q3}$ (Triangle ineq.): $d_q(x,z) \leq d_q(x,y) + d_q(y,z)$.
\end{itemize}
A quasimetric does not require symmetry, i.e.,~$d_q(x,y) = d_q(y,x)$ does not hold in general.

We define a simple quasimetric $d_{\text{simple}}$ using rectified linear units:
\begin{equation}
\begin{aligned}
d_{\text{simple}}(x,y)
= {} & \alpha \max(\mathrm{relu}(x-y)) + (1-\alpha)d^{-1}\!\sum_i \mathrm{relu}(x_i-y_i).
\end{aligned}
\end{equation}
This metric is a weighted average of the maximum and average positive difference between the vectors $x$ and $y$ along any dimension, where $\alpha\in[0,1]$ is a weight. In Appendix~\ref{app:simple}, we show that $d_{\text{simple}}$ satisfies the triangle inequality and latent positive homogeneity \citep{wang2022improved}.

The Wide Norm quasimetric~\citep{pitis2020inductive}, $d_{\text{WN}}$, applies a learned transformation to an asymmetric representation of the difference between two states. The Wide Norm is defined as
\begin{equation*}
    d_{\text{WN}}(x,y)=||W (\operatorname{relu}(x-y) :: \operatorname{relu}(y-x))||_2,
\end{equation*}
where ``$::$'' denotes concatenation and $W\in\sR^{k\times 2d}$ is a learned weight matrix. This ensures that $d_{\text{WN}}(x,y)$ is non-negative and satisfies the triangle inequality, while concatenation is asymmetric.

The Interval Quasimetric Embedding (IQE) \citep{wang2022improved} leverages the Lebesgue measure of interval unions to capture asymmetric distances. IQE interprets the latent embeddings as matrices $X,Y\in\sR^{k \times m}$ (typically obtained by reshaping a flat output vector of dimension $d=k \cdot m$). Let \(x_{ij}\) denote the element in row $i$ and column $j$ of matrix $X$. For each row \(i\), we construct an interval by taking the union over the intervals defined by matrices $X$ and $Y$:
\[
I_i(X,Y) = \bigcup_{j=1}^{m} \left[ x_{ij},\, \max\{x_{ij},\, y_{ij}\} \right].
\]
The length of this interval, denoted by \(L_i(X,Y)\), is computed as its Lebesgue measure. The IQE distance is obtained by aggregating these row-wise lengths. For example, one may define
\[
d_{\text{IQE}}(X,Y) = \sum_{i=1}^{k} L_i(X,Y),
\]
or, alternatively, using a maxmean reduction:
\begin{equation*}
\begin{aligned}
d_{\text{IQE-mm}}(X,Y) &= \alpha\, \max_{1\le i\le k} L_i(X,Y) + (1-\alpha)\, \frac{1}{k} \sum_{i=1}^{k} L_i(X,Y)
\end{aligned}
\end{equation*}
where \(\alpha \in [0,1]\) balances the influence of the maximum and the average. This construction yields a quasimetric that inherently respects the triangle inequality while accounting for directional differences between the matrices $X$ and $Y$.

Given any of the above quasimetrics $d_q$ (i.e.,~$d_{\text{simple}}$, $d_{\text{WN}}$ or $d_{\text{IQE}}$), we can now define an asymmetric distance function $d_\theta(s,s') = d_q(\phi_\theta(s), \phi_\theta(s'))$. In the case of $d_{\text{IQE}}$, the state embedding $\phi:\gS\to\sR^{d}$ produces an output that is reshaped into a $k \times m$ matrix structure to parameterize the intervals. The choice of quasimetric directly shapes the trade-offs in computational cost and optimization dynamics. In Appendix~\ref{app:ablation}, we present an ablation study examining how this choice affects our algorithms.

\section{Learning Asymmetric MAD Estimates}
\label{sec:learning asymmetric}

Here, we propose two novel variants of the MAD learning approach. Each trains a state encoding $\phi_\theta$ mapping states to an embedding space and uses a quasimetric $d_q$ to compute distances $d_\theta(s,s') = d_q(\phi_\theta(s), \phi_\theta(s'))$ between pairs of states $(s,s')$.
Both variants support any quasimetric formulation such as $d_{\text{simple}}$, $d_{\text{WN}}$ and $d_{\text{IQE}}$, and can incorporate additional features such as gradient clipping. A full derivation of these learning objectives is provided in Appendix~\ref{app:full_derivation}.

\subsection{MadDist: Direct Distance Learning}
\label{sec:mad_dist}
The first algorithm, which we call \textit{MadDist}, learns state distances using an approach similar to prior work \citep{steccanella2022state}, but differs in the use of a quasimetric distance function and a scale-invariant loss. Concretely, MadDist minimizes the following composite loss function:
\begin{equation}
    \gL = \gL_\tau + w_r \gL_r + w_c \gL_c.
\end{equation}
The first loss term, $\gL_\tau$, is a scaled version of the square difference in~\eqref{eq:unconstrained}:
\begin{equation}
        \gL_\tau = \E_{\tau\sim\gD, (s_i, s_j) \sim \tau} \left[\left(\frac{d_\theta(s_i,s_j)}{j-i} - 1\right)^2\right].
\end{equation}
Crucially, scaling makes the loss invariant to the magnitude of the estimation error, which typically increases as a function of $j-i$. In other words, states that are further apart on a trajectory do not necessarily dominate the loss simply because the magnitude of the estimation error is larger.

The second loss term, $\gL_r$, which is weighted by a factor $w_r>0$, is a contrastive loss that encourages separation between state pairs randomly sampled from all trajectories:
\begin{equation}
    \gL_r = \E_{(s,s') \sim \gS_\gD}\left[\left(\operatorname{relu}\left( 1 - \frac{d_\theta(s, s')}{d_{\text{max}}} \right)\right) ^ 2 \right]
\end{equation}
where $d_{\text{max}}$ is a hyperparameter. Finally, the loss term $\gL_c$, which is weighted by a factor $w_c>0$, enforces the upper bound constraints. Specifically, let $\mathcal{D}_{\leq H_c}$ denote the set of state pairs sampled from trajectories in $\mathcal{D}$ such that the index difference satisfies $1 \leq j - i \leq H_c$ (where $H_c$ is a hyperparameter), i.e.
\begin{equation*}
\mathcal{D}_{\leq H_c} = \left\{ (s_i, s_j) \,\middle|\, \tau \in \mathcal{D},\; s_i, s_j \in \tau,\; 1 \leq j - i \leq H_c \right\}.    
\end{equation*}

Then, the constraint loss is defined as:
\begin{equation}
\scalebox{1}{$
\mathcal{L}_c = \mathbb{E}_{(s_i, s_j) \sim \mathcal{D}_{\leq H_c}}
\left[ (\operatorname{relu}\left(d_\theta(s_i, s_j) - (j - i)\right))^2 \right]
$}
\label{eq:constraints}
\end{equation}

\subsection{TDMadDist: Temporal Difference Learning}
\label{sec:td_mad_dist}

The second algorithm, which we call \textit{TDMadDist}, incorporates temporal difference learning principles by maintaining a separate target embedding $\phi_{\theta'}$ and learning via bootstrapped targets. Specifically, TDMadDist learns by minimizing the loss function $\gL' = \gL_\tau' + w_r \gL_r' + w_c \gL_c$, where $\gL_c$ is the loss term from~\eqref{eq:constraints} that enforces the upper bound constraints.

The first loss term $\gL_\tau'$ of TDMadDist is modified to include bootstrapped distances:
\begin{equation}
\scalebox{0.95}{$
\gL_\tau' = \E_{\tau\sim\gD, (s_i, s_j) \sim \tau} \left[\left(\frac{d_\theta(s_i,s_j)}{\min(j-i, 1 + d_{\theta'}(s_{i+1},s_j))} - 1\right)^2\right]
$}
\end{equation}
Hence if the current distance estimate $d_{\theta'}(s_{i+1},s_j)$ computed using the target embedding $\phi_{\theta'}$ is smaller than $j-(i+1)$, the objective is to make $d_\theta(s_i,s_j)$ equal to $1+d_{\theta'}(s_{i+1},s_j)$.

We also modify the second loss term $\gL_r'$ to include bootstrapped distances:
\begin{equation}
\scalebox{0.93}{$
\gL_r' = \E_{\tau\sim\gD, (s_i,s_{i+1})\sim\tau, s_r \sim \gS_\gD} 
\left[\left(\frac{d_\theta(s_i,s_{r})}{1+d_{\theta'}(s_{i+1},s_r)} - 1\right)^2\right]
$}
\end{equation}
Given a state $s_i$ sampled from a trajectory of $\gD$ and a random state $s_r\in\gS_\gD$, the objective is to make $d_\theta(s_i,s_r)$ equal to $1+d_{\theta'}(s_{i+1},s_r)$.

The target network parameters $\theta'$ are updated in each time step via an exponential moving average with hyperparameter $\beta\in(0,1)$:
\begin{equation}
    \theta' \leftarrow (1-\beta)\theta' + \beta\theta.
\end{equation}

\section{Experiments}
\label{sec:experiments}

We evaluate our proposed MAD learning algorithms on a diverse set of environments with varying characteristics, including deterministic and stochastic dynamics, discrete and continuous state spaces, and environments with noisy observations. Our analysis is directed by the following questions:

\begin{itemize}[leftmargin=0.5cm]
    \item How accurately do our learned embeddings capture the true minimum action distances?

    \item How does the performance of our method compare to existing quasimetric learning approaches?

    \item How well does the learned distance transfer to downstream planning control?
\end{itemize}

\textbf{Code Availability.} The source code will be publicly released at: \url{https://github.com/lorenzosteccanella/MinimumActionDistance}.

\textbf{Evaluation Metrics.} We evaluate the quality of our learned representations using three metrics:
\begin{itemize}[leftmargin=0.5cm]

    \item \textbf{Spearman Correlation ($\rho$)}: Measures the preservation of ranking relationships between state pairs. A high Spearman correlation indicates that if state $s_i$ is farther from state $s_j$ than from state $s_k$ in the true environment, our learned metric also predicts this same ordering. Perfect preservation of distance rankings gives $\rho = 1$.
    
    \item \textbf{Pearson Correlation ($r$)}: Measures the linear relationship between predicted and true distances. A high Pearson correlation indicates that our learned distances scale proportionally with true distances (i.e. when true distances increase, our predictions increase linearly as well). Perfect linear correlation gives $r = 1$.

    \item \textbf{Ratio Coefficient of Variation (CV)}: Measures the consistency of our distance scaling across different state pairs. A low CV indicates that our predicted distances maintain a consistent ratio to true distances throughout the state space. For example, if we consistently predict distances that are approximately 1.5 times the true distance, CV will be low. High variation in this ratio across different state pairs results in high CV. More formally, given a set of ground-truth distances ${d_1, d_2, ..., d_n}$ and their corresponding predicted distances ${\hat{d}_1, \hat{d}_2, ..., \hat{d}_n}$ where $d_i > 0$, we compute the ratios $r_i = \hat{d}_i / d_i$. The Ratio CV is given by
    \begin{equation}
        CV = \frac{\sigma_r}{\mu_r} = \frac{\sqrt{\frac{1}{n}\sum_{i=1}^{n}(r_i - \mu_r)^2}}{\frac{1}{n}\sum_{i=1}^{n}r_i},
    \end{equation}
\end{itemize} 

\textbf{Baselines.} We compare our methods against QRL~\citep{wang2023optimal}, a recent quasimetric reinforcement learning approach that learns state representations using the Interval Quasimetric Embedding (IQE) formulation. QRL employs a Lagrangian optimization scheme where the objective maximizes the distance between states while maintaining locality constraints. 

\begin{figure}[t]
    \centering

    \begin{subfigure}[b]{0.38 \textwidth}
        \centering
        \includegraphics[width=\linewidth, height=2cm]{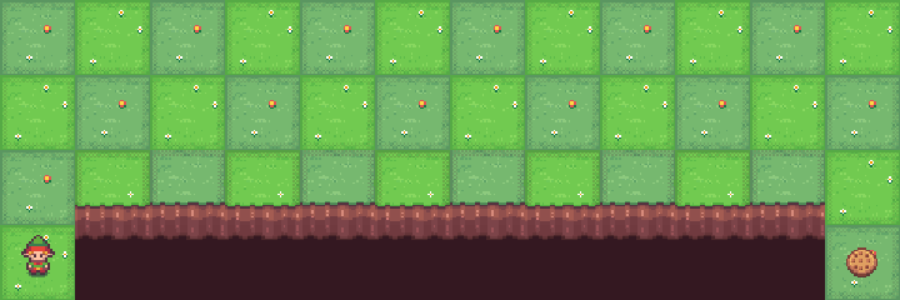}
        \caption*{CliffWalking}
    \end{subfigure}
    \hfill
    \begin{subfigure}[b]{0.29 \textwidth}
        \centering
        \includegraphics[width=\linewidth, height=2.5cm]{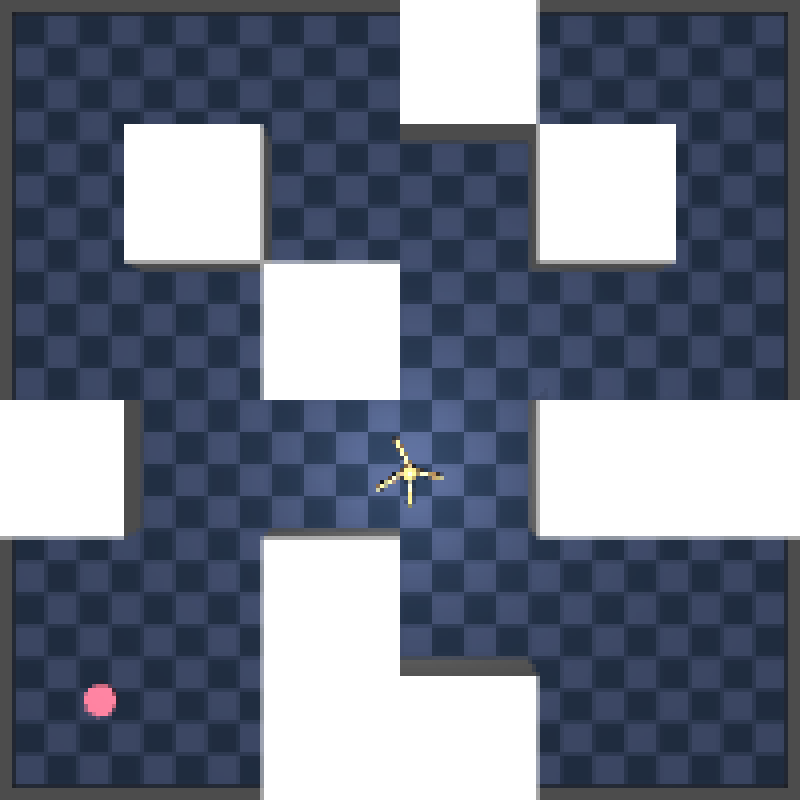}
        \caption*{OGBenchAntMaze}
    \end{subfigure}
    \hfill
    \begin{subfigure}[b]{0.29 \textwidth}
        \centering
        \includegraphics[width=\linewidth, height=2.5cm]{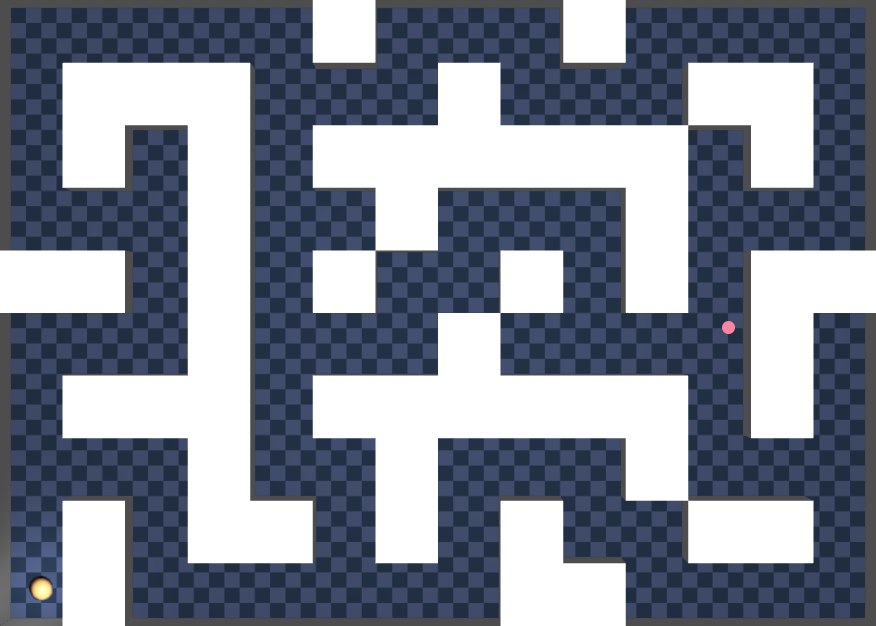}
        \caption*{OGBenchGiantMaze}
    \end{subfigure}

    \caption{A subset of the environments used in our analysis.}
    \label{fig:environments}
\end{figure}

We also compare against PlanDist~\citep{steccanella2022state} and a variant of PlanDist that uses the simple quasimetric PlanDist-Simple.

Finally, we compare against the approach by \citet{park2024foundationpolicieshilbertrepresentations}, an offline reinforcement learning method that embeds states into a learned Hilbert space.
In this space, the distance between embedded states approximates the MAD, leading to a symmetric distance metric that cannot capture the natural asymmetry of the true MAD.
We include this comparison to demonstrate the benefits of methods that explicitly model the quasimetric nature of the MAD over those that do not.

\begin{figure*}[t]
    \centering
    \begin{subfigure}{\linewidth}
        \includesvg[width=0.32\linewidth]{images_camera_ready/CliffWalking_c}
        \hfill
        \includesvg[width=0.32\linewidth]{images_camera_ready/ogbench-pm-giant-stitch_c}
        \hfill
        \includesvg[width=0.32\linewidth]{images_camera_ready/antmaze-medium-explore-v0_c}
    \end{subfigure}
    
    \begin{subfigure}{\linewidth}
        \includesvg[width=0.32\linewidth]{images_camera_ready/CliffWalking_r}
        \hfill
        \includesvg[width=0.32\linewidth]{images_camera_ready/ogbench-pm-giant-stitch_r}
        \hfill
        \includesvg[width=0.32\linewidth]{images_camera_ready/antmaze-medium-explore-v0_r}
    \end{subfigure}
    \caption{(Top) Pearson correlation coefficients and (Bottom) coefficient of variation ratios across a selection of test environments. Shaded regions show the range of values across five random seeds, with upper and lower boundaries representing maximum and minimum values.}
    \label{fig:main_results}  
    \vspace{-10pt}
\end{figure*}




    

\textbf{Environments.} To evaluate the proposed methods, we designed a suite of environments where the true MAD is known, enabling a precise quantitative assessment of our learned representations. This perfect knowledge of the ground-truth distances allows us to rigorously evaluate how well different algorithms recover the underlying structure of the environment. A subset of the environments are illustrated in Figure~\ref{fig:environments}, with full details provided in Appendix~\ref{app:environments}.

Our test environments span a comprehensive range of MDP characteristics:

\begin{itemize}[leftmargin=0.5cm]
    \item \textbf{NoisyGridWorld}: A continuous grid world environment with stochastic transitions. The agent can move in four cardinal directions, but the action may fail with a small probability, causing the agent to remain in the same state. The initial state is random, and the goal is to reach a target state. The MAD is known and can be computed as the Manhattan distance between states. Moreover, we included random noise in the observations by extending the state $(x, y)$ with a random vector of size two, resulting in a 4-dimensional state space, where the first two dimensions are the original coordinates and the last two dimensions correspond to noise.
    \item \textbf{KeyDoorGridWorld}: A discrete grid world environment where the agent must find a key to unlock a door. The agent can move in four cardinal directions, and the state $(x, y, k)$ is represented by the agent's position $(x, y)$ and whether it has the key $(k)$. The MAD is known and can be computed as the Manhattan distance between states, where the distance between a state without the key and a state with the key is the sum of the distances to the key. The key can only be picked up and never dropped, creating a strong asymmetry in the distance function.
    \item \textbf{CliffWalking}: The original CliffWalking environment as described by \citet{sutton1998reinforcement}. The agent starts at the leftmost state and must reach the rightmost state while avoiding falling off the cliff. If the agent falls, it returns to the starting state, but the episode is not reset. This creates a strong asymmetry in the distance function, as the agent can take the shortcut by falling off the cliff to move between states.
    \item \textbf{PointMaze}: A continuous maze environment where the agent must navigate through a series of walls to reach a goal \citep{fu2021d4rldatasetsdeepdatadriven}. The task in the environment is for a 2-DoF ball that is force-actuated in the Cartesian directions x and y, to reach a target goal in a closed maze. The underlying maze is a 2D grid with walls and obstacles, which we use in our experiments to approximate the ground-truth MAD by computing all-pairs shortest paths using the Floyd-Warshall algorithm on the maze graph. We consider two variants of this environment: \textbf{UMaze} and \textbf{MediumMaze}.
    \item \textbf{OGBench PointMaze}: A suite of physics-based maze environments that extend the standard PointMaze to much larger and more challenging layouts \citep{park2024ogbench}. These environments are designed to test long-horizon planning and provide two types of datasets: \textit{navigate}, collected by a noisy expert policy navigating to random goals, and \textit{stitch}, consisting of short goal-reaching trajectories that must be combined to solve tasks.
    \item \textbf{OGBench AntMaze}: A high-dimensional extension of the PointMaze task where the 2-DoF ball is replaced by a complex 8-jointed quadruped (``Ant'') robot controlled via torque actuators \citep{park2024ogbench}. It provides access to the \textit{navigate}, \textit{stitch} datasets, generated in the same way as in PointMaze, and an additional \textit{explore} dataset, which consists of data collected by a purely random policy to test the agent's ability to learn from undirected transitions. The ground-truth MAD is approximated by the shortest path through the underlying maze grid.
\end{itemize}

\begin{table*}[t]
\centering
\resizebox{0.9\textwidth}{!}{%
\begin{tabular}{l|cccccc}
\toprule
Environments & PlanDist & PlanDist-Simple & QRL & TDMadDist & Hilbert & MadDist \\
\midrule
AntMaze Medium Explore    & 0.19 $\pm$ 0.22 & 0.05 $\pm$ 0.12 & 0.49 $\pm$ 0.23 & 0.39 $\pm$ 0.25 & 0.01 $\pm$ 0.07 & \textbf{0.80 $\pm$ 0.21} \\
PointMaze Giant Navigate  & 0.69 $\pm$ 0.27 & 0.77 $\pm$ 0.23 & 0.87 $\pm$ 0.21 & \textbf{0.99 $\pm$ 0.05} & 0.16 $\pm$ 0.17 & 0.93 $\pm$ 0.17 \\
PointMaze Giant Stitch    & 0.58 $\pm$ 0.23 & 0.22 $\pm$ 0.27 & 0.95 $\pm$ 0.12 & 0.74 $\pm$ 0.26 & 0.05 $\pm$ 0.14 & \textbf{0.99 $\pm$ 0.07} \\
PointMaze Large Navigate  & 0.76 $\pm$ 0.25 & 0.99 $\pm$ 0.07 & 0.97 $\pm$ 0.09 & 0.70 $\pm$ 0.30 & 0.22 $\pm$ 0.20 & \textbf{1.00 $\pm$ 0.00} \\
PointMaze Large Stitch    & 0.45 $\pm$ 0.29 & 0.17 $\pm$ 0.23 & 0.90 $\pm$ 0.17 & 0.73 $\pm$ 0.24 & 0.17 $\pm$ 0.20 & \textbf{1.00 $\pm$ 0.00} \\
PointMaze Medium Navigate & 0.77 $\pm$ 0.24 & 0.89 $\pm$ 0.17 & 0.86 $\pm$ 0.21 & 0.92 $\pm$ 0.16 & 0.55 $\pm$ 0.27 & \textbf{1.00 $\pm$ 0.00} \\
PointMaze Medium Stitch   & 0.61 $\pm$ 0.29 & 0.51 $\pm$ 0.29 & 0.81 $\pm$ 0.20 & 0.74 $\pm$ 0.24 & 0.67 $\pm$ 0.28 & \textbf{1.00 $\pm$ 0.00} \\
\bottomrule
\end{tabular}%
}
\caption{Success rates ($\pm$ standard deviation) across different OGBench environments. Best results per environment are shown in bold.}
\label{tab:pointmaze_results}
\vspace{-10pt}
\end{table*}

\textbf{Empirical Setup.} We compared our two algorithms, MadDist and TDMadDist, against the QRL, Hilbert, PlanDist and PlanDist-Simple. Each method was trained for \num{50000} / \num{100000} / \num{200000} gradient steps on offline datasets collected under different data-generation protocols. For the CliffWalking, NoisyGridWorld, and KeyDoorGridWorld environments, we used \num{100} trajectories collected by a random policy; for the PointMaze and AntMaze environments (including Navigate and Stitch variants), we used \num{1000} trajectories following the standard dataset construction of each benchmark. All reported results are means over five independent runs (random seeds) to ensure statistical robustness. For full implementation details of our evaluation setup, see Appendix~\ref{app:impl_details}. 

Figure~\ref{fig:main_results} shows the Pearson correlation and coefficient of variation (CV) ratio for CliffWalking, OGBench PointMaze Giant Maze, and the OGBench AntMaze Medium Maze environments. The full results produced in all environments, including the Spearman correlations (which we found closely matched the Pearson correlations) can be found in Appendix~\ref{app:full_results}. Appendix~\ref{app:ablation} contains additional ablation studies, and demonstrates that MadDist and TDMadDist are robust to the size of the latent dimension and the choice of quasimetric, and that their performance degrades gracefully with dataset size. 

Table \ref{tab:pointmaze_results} reports performance on a downstream planning task, where the learned distance embeddings are used to guide the agent toward specific goals. These experiments are intended as a controlled diagnostic of representation quality rather than as a benchmark of planning performance. In much of the prior literature, learned representations are evaluated only inside full reinforcement-learning pipelines, so downstream success reflects both the quality of the representation and the details of policy learning. For example, QRL is typically paired with actor-critic optimization and additional ingredients such as behavioral cloning, making it difficult to attribute performance gains to the learned distance alone.

To mitigate these confounding factors, we deliberately use a simple random-shooting MPC planner together with the true environment simulator. This choice removes errors due to learned dynamics models and minimizes the influence of sophisticated action optimization methods, so planning performance is driven primarily by the learned distance as a measure of progress toward the goal.

This planning evaluation also complements our direct metric-learning analysis. Because our benchmark environments have known ground-truth MAD values, we can assess distance accuracy directly, which is typically impossible in standard downstream RL benchmark tasks. The planning results should therefore be interpreted as a controlled validation: if a method learns a better approximation of the MAD, that improvement should translate into better goal-reaching behavior even under a deliberately weak planner. A detailed description of the planning setup is provided in Appendix~\ref{app:planning}.

\textbf{Discussion.} From the results in Figure \ref{fig:main_results}, we can see that our proposed method MadDist outperforms the QRL, Hilbert, PlanDist and PlanDist-Simple baselines in all environments, being able to learn a more accurate approximation of the MAD. Compared to QRL, this advantage likely stems from the fact that QRL primarily relies on locality constraints to shape the embedding space, while our method leverages the path distances between arbitrary states in a trajectory to form a more globally coherent representation. 

The comparison with PlanDist and PlanDist-Simple is particularly informative because these methods optimize objectives that are conceptually closer to ours. PlanDist leverages trajectory-level supervision, but remains constrained by a symmetric distance formulation, limiting its ability to represent directional structure in environments with irreversible transitions. PlanDist-Simple addresses this limitation by adopting the proposed quasimetric formulation while retaining the original learning objective. The performance gap between PlanDist-Simple and MadDist therefore suggests that the gains are not explained by the quasimetric alone: the combination of asymmetric distance modelling together with our revised objective formulation, including the contrastive loss term and trajectory-based constraints (see Appendix \ref{app:ablation_loss_hyper}), is necessary to accurately recover the MAD. 

Both MadDist and TDMadDist significantly outperform the Hilbert baseline, particularly in highly asymmetric environments like CliffWalking. While TDMadDist underperforms MadDist and the QRL algorithm, its performance relative to Hilbert is revealing. Since both methods utilize temporal-difference learning to propagate distance information, the superiority of TDMadDist suggests that our specific distance parametrization and learning objective provide a more effective and stable signal for approximating the MAD than the Hilbert formulation. This advantage persists even in symmetric environments, demonstrating that our approach more accurately captures the underlying geometry of the state space.

Crucially, the high accuracy of the learned distance metric directly translates to superior performance in the downstream task of goal-oriented planning, as detailed in Table \ref{tab:pointmaze_results}. MadDist achieves high or perfect success rates across all environments, decisively outperforming all baselines. Its performance is particularly noteworthy in the \textit{Stitch} environments, which require the model to compose information from disconnected trajectories, and in \textit{AntMaze Medium Explore}, where it successfully learns from purely undirected, random data. Furthermore, its success in the large-scale \textit{Giant} environments highlights a robust capability for long-horizon reasoning. This demonstrates that MadDist not only produces a quantitatively accurate distance function but also an effective and practical representation for planning.

\section{Conclusion}
\label{conclusion}

In this paper, we present two novel algorithms for learning the Minimum Action Distance (MAD) from state trajectories. We also propose a novel quasimetric for learning asymmetric distance estimates, and introduce a set of benchmark domains with several features that make distance learning difficult. In a controlled set of experiments, we illustrate that the novel algorithms and proposed quasimetric outperform state-of-the-art algorithms for learning the MAD.

While this work has concentrated on accurately approximating the MAD as a fundamental stepping stone, it opens several promising avenues for future research. One of them is the use of MAD estimates in transfer learning and non-stationary environments, where transition dynamics evolve over time yet maintain a consistent support. On the same line, MAD can be integrated as a heuristic in search algorithms, particularly in stochastic domains, to identify the properties that make it a robust and informative guidance signal under uncertainty. Having established a reliable MAD approximation, it can now be incorporated into downstream tasks, including goal-conditioned planning and reinforcement learning, to quantify the empirical benefits it brings to complex decision-making problems.

Finally, while MAD can serve as a useful heuristic even in stochastic environments, future work will explore whether it is possible to recover the Stochastic Shortest Path Distance (SPD) or identify alternative quasimetrics that more closely align with it.

\newpage
\bibliographystyle{unsrtnat-nice}
\providecommand{\noopsort}[1]{}

\newpage
\appendix

\section{Proof of Uniqueness for the MAD Optimization Problem}
\label{app:uniqueness}
We begin by formally defining the Minimum Action Distance (MAD) in terms of policies and minimum hitting times within a Markov Decision Process (MDP).

\begin{definition}[Minimum Action Distance]
For a fixed policy $\pi$, define the minimum hitting time from state $s$ to state $s'$ as

\[
T_{s \to s'}^{\pi}
=
\min
\left\{
t\in\mathbb{N}\cup\{\infty\}
:
\mathbb{P}(S_t=s' \mid S_0=s,\pi)>0
\right\}.
\]

That is, $T_{s \to s'}^{\pi}$ denotes the earliest decision step at which state $s'$ is reachable from $s$ with non-zero probability under policy $\pi$.

The Minimum Action Distance is defined as

\[
d_{\text{MAD}}(s,s')
=
\inf_{\pi}
T_{s\to s'}^{\pi}.
\]

\end{definition}

This definition captures the earliest decision step at which a state can be reached with non-zero probability. For a fixed policy $\pi$, the quantity $T_{s\to s'}^{\pi}$ identifies the earliest reachable decision step from $s$ to $s'$. The outer infimum then selects the smallest value across all policies. In particular, if $s=s'$, then $d_{\text{MAD}}(s,s')=0$.

\paragraph{Equivalence to Graph Shortest Path} Let $G = (\mathcal{S}, R)$ be the state-transition graph where an edge $(s, s') \in R$ exists if and only if there is an action $a$ with $P(s'|s, a) > 0$. A path of length $k$ from $s_i$ to $s_j$ in $G$ corresponds to a sequence of actions that can transition between these states with non-zero probability. We can always construct a policy $\pi$ that executes this specific sequence. Therefore, minimizing over all policies is equivalent to finding the length of the shortest path between nodes $s_i$ and $s_j$ in the graph $G$. This equivalence allows us to leverage the properties of shortest path distances in the proof below.

\begin{theorem}
The Minimum Action Distance, $d_{\text{MAD}}$, as defined above, is the unique solution to the constrained optimization problem:
\begin{align*}
\underset{d}{\text{maximize}} \quad & \sum_{(s,s') \in \mathcal{S}^2} d(s, s') \\
\text{subject to} \quad & d(s, s) = 0 \quad \forall s \in \mathcal{S} \tag{C1} \\
& d(s, s') \le 1 \quad \forall (s, s') \in R \tag{C2} \\
& d(s, s') \le d(s, s'') + d(s'', s') \quad \forall (s, s', s'') \in \mathcal{S}^3 \tag{C3}
\end{align*}
\end{theorem}

\begin{proof}
The proof is structured in two parts. First, we show that $d_{\text{MAD}}$ is a feasible solution. Second, we show that for any other feasible solution $d$, we must have $d(s, s') \le d_{\text{MAD}}(s, s')$, establishing both optimality and uniqueness.

\textbf{Part 1: Feasibility of $d_{\text{MAD}}$}

Using the shortest path interpretation of $d_{\text{MAD}}$, we verify that it satisfies each constraint.

\begin{itemize}
    \item \textbf{Constraint (C1) - Identity:} The shortest path from any state $s$ to itself is the empty path of length 0. Thus, $d_{\text{MAD}}(s, s) = 0$.
    
    \item \textbf{Constraint (C2) - One-Step Reachability:} If $(s, s') \in R$, there exists a direct edge from $s$ to $s'$ in $G$. This corresponds to a path of length 1. The shortest path, $d_{\text{MAD}}(s, s')$, cannot be longer than this path, so $d_{\text{MAD}}(s, s') \le 1$.
    
    \item \textbf{Constraint (C3) - Triangle Inequality:} This is a fundamental property of shortest paths. The shortest path from $s$ to $s'$ is, by definition, no longer than the path formed by concatenating the shortest path from $s$ to an intermediate state $s''$ and the shortest path from $s''$ to $s'$. This directly gives the inequality $d_{\text{MAD}}(s, s') \le d_{\text{MAD}}(s, s'') + d_{\text{MAD}}(s'', s')$.
\end{itemize}
As $d_{\text{MAD}}$ satisfies all constraints, it is a feasible solution.

\textbf{Part 2: Optimality and Uniqueness of $d_{\text{MAD}}$}

Let $d$ be an arbitrary feasible solution satisfying (C1), (C2), and (C3). We show by induction on the shortest path length $k = d_{\text{MAD}}(s, s')$ that $d(s, s') \le d_{\text{MAD}}(s, s')$.

\begin{itemize}
    \item \textbf{Base Case ($k=0$):} If $d_{\text{MAD}}(s, s') = 0$, then $s = s'$. By constraint (C1), any feasible solution $d$ must satisfy $d(s, s) = 0$. Thus, $d(s, s') = 0 = d_{\text{MAD}}(s, s')$.

    \item \textbf{Inductive Hypothesis:} Assume for some integer $k \ge 0$ that for all pairs $(s, s')$ with $d_{\text{MAD}}(s, s') \le k$, the inequality $d(s, s') \le d_{\text{MAD}}(s, s')$ holds.

    \item \textbf{Inductive Step:} Consider a pair $(s, s')$ with $d_{\text{MAD}}(s, s') = k + 1$. By the shortest path definition, there must exist a predecessor state $s''$ on a shortest path from $s$ to $s'$ such that $(s'', s') \in R$ and $d_{\text{MAD}}(s, s'') = k$.
    
    Applying the constraints on $d$:
    \begin{align*}
        d(s, s') &\le d(s, s'') + d(s'', s') && \text{by (C3), the triangle inequality} \\
                 &\le d_{\text{MAD}}(s, s'') + d(s'', s') && \text{by Inductive Hypothesis, since } d_{\text{MAD}}(s, s'') = k \\
                 &\le k + 1 && \text{by (C2), since } (s'', s') \in R
    \end{align*}
    Since $d_{\text{MAD}}(s, s') = k + 1$, we have shown that $d(s, s') \le d_{\text{MAD}}(s, s')$.
\end{itemize}

By induction, we have established that for any feasible solution $d$, the inequality $d(s, s') \leq d_{\text{MAD}}(s, s')$ holds for all pairs $(s, s') \in \mathcal{S}^2$.

\begin{itemize}
    \item \textbf{Optimality:} The objective is to maximize the sum $\sum_{(s,s')\in\mathcal{S}^2} d(s, s')$. Since we have shown that every term $d(s, s')$ is less than or equal to the corresponding term $d_{\text{MAD}}(s, s')$, the total sum for any feasible solution $d$ cannot exceed the sum for $d_{\text{MAD}}$:

    \begin{equation*}
        \sum_{(s,s')\in\mathcal{S}^2} d(s, s') \leq \sum_{(s,s')\in\mathcal{S}^2} d_{\text{MAD}}(s, s')
    \end{equation*}

    Since $d_{\text{MAD}}$ is itself a feasible solution, it achieves the maximum possible value, proving it is an optimal solution.

    \item \textbf{Uniqueness:} Let's assume $d^*$ is another solution that is also optimal.
    \begin{itemize}
        \item For $d^*$ to be optimal, its total sum must equal the maximum possible sum:
        \begin{equation*}
            \sum_{(s,s')\in\mathcal{S}^2} d^*(s, s') = \sum_{(s,s')\in\mathcal{S}^2} d_{\text{MAD}}(s, s')
        \end{equation*}

        \item From the induction proof we know that $d^*(s, s') \leq d_{\text{MAD}}(s, s')$ for every single pair $(s, s')$.
    \end{itemize}

    Therefore $d^*(s, s') = d_{\text{MAD}}(s, s') \quad \forall (s, s') \in \mathcal{S}^2$.
    
\end{itemize}

\end{proof}

\section{Quasimetric Constructions via ReLU Reduction}
\label{app:simple}
Let $x,y\in\mathbb{R}^d$.  We begin by defining a ReLU-based coordinate reduction, then derive scalar quasimetrics through several aggregation operators, and finally state general results for convex combinations.

\subsection{Coordinatewise ReLU Reduction}
\begin{definition}[ReLU Reduction]
Define the map $r:\mathbb{R}^d\times\mathbb{R}^d\to\mathbb{R}^d$ by
\[
  r(x,y)=\operatorname{relu}(x-y),
  \qquad
  r_i(x,y)=\max\bigl\{x_i-y_i,0\bigr\},\quad i=1,\dots,d.
\]
\end{definition}

\begin{proposition}
For all $x,y,z\in\mathbb{R}^d$ and $\lambda>0$, each coordinate $r_i$ satisfies:
\begin{enumerate}[label=(\alph*)]
  \item \emph{Nonnegativity and identity:} $r_i(x,y)\ge0$ and $r_i(x,x)=0$.
  \item \emph{Asymmetry:} $r_i(x,y)\neq r_i(y,x)$ unless $x_i=y_i$.
  \item \emph{Triangle inequality:} $r_i(x,y)\le r_i(x,z)+r_i(z,y)$.
  \item \emph{Positive homogeneity:} $r_i(\lambda x,\lambda y)=\lambda\,r_i(x,y)$.
\end{enumerate}
\end{proposition}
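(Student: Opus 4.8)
The plan is to prove all four claims directly from the definition $r_i(x,y)=\max\{x_i-y_i,0\}=\operatorname{relu}(x_i-y_i)$, treating each coordinate $i$ in isolation (since $r_i$ depends only on the $i$-th entries of its arguments). The single tool I would reuse throughout is the elementary behavior of the scalar map $t\mapsto\operatorname{relu}(t)$: it is nonnegative, monotone nondecreasing, dominates the identity ($t\le\operatorname{relu}(t)$), is subadditive ($\operatorname{relu}(a+b)\le\operatorname{relu}(a)+\operatorname{relu}(b)$), and is positively homogeneous ($\operatorname{relu}(\lambda t)=\lambda\operatorname{relu}(t)$ for $\lambda\ge 0$). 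Each of (a)--(d) is then a one-line consequence.

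Concretely: for (a), nonnegativity is immediate from $\max\{\cdot,0\}\ge 0$, and setting $y=x$ gives $r_i(x,x)=\operatorname{relu}(0)=0$. For (b), I would split on the sign of $x_i-y_i$: if $x_i>y_i$ then $r_i(x,y)=x_i-y_i>0$ while $r_i(y,x)=\operatorname{relu}(y_i-x_i)=0$, and if $x_i<y_i$ the roles reverse; so whenever $x_i\neq y_i$ exactly one of the two values is strictly positive and the other vanishes, hence they differ. For (c), write $x_i-y_i=(x_i-z_i)+(z_i-y_i)$ and apply subadditivity of $\operatorname{relu}$ to get $r_i(x,y)\le r_i(x,z)+r_i(z,y)$; alternatively, avoiding any named lemma, note $x_i-y_i\le r_i(x,z)+r_i(z,y)$ because each summand dominates its own argument, while $0\le r_i(x,z)+r_i(z,y)$ trivially, so the maximum of $x_i-y_i$ and $0$ is also bounded by the right-hand side. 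For (d), pull the positive scalar out of the maximum: $r_i(\lambda x,\lambda y)=\max\{\lambda(x_i-y_i),0\}=\lambda\max\{x_i-y_i,0\}=\lambda\,r_i(x,y)$, which uses precisely that $\lambda>0$.

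I do not anticipate a genuine obstacle here; the argument is entirely elementary. The one place to be careful is part (b), whose statement is a conditional ("$r_i(x,y)\neq r_i(y,x)$ unless $x_i=y_i$") rather than an unconditional inequality, so I would make sure to treat both strict-inequality cases $x_i>y_i$ and $x_i<y_i$ explicitly and to observe that the two values coincide exactly when $x_i=y_i$, confirming the exception is sharp.
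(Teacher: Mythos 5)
Your proposal is correct and follows essentially the same route as the paper: the paper likewise dispatches (a) and (b) directly from the definition of $\max$, proves (c) by writing $x_i-y_i=(x_i-z_i)+(z_i-y_i)$ and using subadditivity of $\max(\cdot,0)$, and proves (d) by pulling the positive scalar out of the maximum. Your case split in (b) is slightly more explicit than the paper's one-line dismissal, but the argument is the same.
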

\begin{proof}
(a) and (b) follow directly from the definition of the max operation.  

(c) Observe that
\begin{align*}
  r_i(x,y) &= \max(x_i-y_i,0)=\max\bigl((x_i-z_i)+(z_i-y_i),0\bigr)\\
  &\le\max(x_i-z_i,0)+\max(z_i-y_i,0)=r_i(x,z)+r_i(z,y).
\end{align*}

(d) Linearity of scalar multiplication inside the max gives
\[
  r_i(\lambda x,\lambda y)=\max(\lambda x_i-\lambda y_i,0)=\lambda\max(x_i-y_i,0)=\lambda r_i(x,y).
\]
This concludes the proof.
\end{proof}

\subsection{Scalar Quasimetrics via Aggregation}
We now obtain real-valued quasimetrics by aggregating the vector $r(x,y)$.

\begin{definition}[Max Reduction]
\[
  d_{\max}(x,y)=\max_{1\le i\le d}r_i(x,y).
\]
\end{definition}

\begin{definition}[Sum and Mean Reductions]
\[
  d_{\mathrm{sum}}(x,y)=\sum_{i=1}^d r_i(x,y),
  \quad
  d_{\mathrm{mean}}(x,y)=\tfrac1d\sum_{i=1}^d r_i(x,y).
\]
\end{definition}

\begin{proposition}
Each of $d_{\max}$, $d_{\mathrm{sum}}$, and $d_{\mathrm{mean}}$ satisfies for all $x,y,z\in\mathbb{R}^d$ and $\lambda>0$:
\begin{enumerate}[label=(\alph*)]
  \item \emph{Triangle inequality:} $d( x,y )\le d(x,z)+d(z,y)$.
  \item \emph{Positive homogeneity:} $d(\lambda x,\lambda y)=\lambda\,d(x,y)$.
\end{enumerate}
\end{proposition}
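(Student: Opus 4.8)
The plan is to derive both properties of $d_{\max}$, $d_{\mathrm{sum}}$, and $d_{\mathrm{mean}}$ directly from the coordinatewise facts already established for $r_i$ in the preceding proposition, namely the triangle inequality $r_i(x,y)\le r_i(x,z)+r_i(z,y)$ and positive homogeneity $r_i(\lambda x,\lambda y)=\lambda\,r_i(x,y)$. Each of the three reductions is obtained by applying a fixed aggregation operator — maximum, sum, or normalized sum — to the vector $\bigl(r_1(x,y),\dots,r_d(x,y)\bigr)$, so it suffices to observe that each of these operators is monotone, subadditive, and commutes with multiplication by a positive scalar.

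For positive homogeneity I would simply substitute: $d_{\mathrm{sum}}(\lambda x,\lambda y)=\sum_{i=1}^d r_i(\lambda x,\lambda y)=\sum_{i=1}^d \lambda\,r_i(x,y)=\lambda\,d_{\mathrm{sum}}(x,y)$, with the identical computation for $d_{\mathrm{mean}}$ after dividing through by $d$; for $d_{\max}$ one uses that $\max_{i}(\lambda a_i)=\lambda\max_{i} a_i$ holds for $\lambda>0$. This step is routine one-line bookkeeping.

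For the triangle inequality, the sum and mean cases follow by summing the coordinatewise inequality over $i=1,\dots,d$ (and, for the mean, dividing by $d$), since summation is monotone. The max case is the only step that needs a short argument: starting from $r_i(x,y)\le r_i(x,z)+r_i(z,y)$ for every $i$, take the maximum over $i$ on both sides and apply the subadditivity bound $\max_i(a_i+b_i)\le \max_i a_i+\max_i b_i$ to obtain $d_{\max}(x,y)\le d_{\max}(x,z)+d_{\max}(z,y)$.

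The only (mild) obstacle is precisely this subadditivity of the maximum; it is entirely standard — for the index $j$ attaining the left-hand maximum, $a_j+b_j\le \max_i a_i+\max_i b_i$ — but worth stating explicitly, as it is the one place the argument is not an immediate substitution. Everything else reduces to the coordinatewise proposition.
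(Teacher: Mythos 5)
Your proof is correct and follows essentially the same route as the paper: aggregate the coordinatewise triangle inequality and homogeneity of $r_i$, using term-wise summation for $d_{\mathrm{sum}}$ and $d_{\mathrm{mean}}$ and the subadditivity bound $\max_i(a_i+b_i)\le\max_i a_i+\max_i b_i$ for $d_{\max}$ — exactly the two ingredients the paper invokes. Your explicit justification of the max-subadditivity step is a welcome addition but does not change the argument.
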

\begin{proof}
(a) follows by combining coordinate-wise triangle bounds with either:
\begin{itemize}
  \item $d_{\max}:\max_i[a_i+b_i]\le\max_i a_i +\max_i b_i$,
  \item $d_{\mathrm{sum}}$ and $d_{\mathrm{mean}}$: term-wise summation.
\end{itemize}

(b) is immediate from the linearity of scalar multiplication and properties of max/sum.
\end{proof}

\subsection{Convex Combinations of Quasimetrics}
More generally, let $d_1,\dots,d_n$ be any quasimetrics on $\mathbb{R}^d$ each obeying the triangle inequality and positive homogeneity.  For weights $\alpha_1,\dots,\alpha_n\ge0$ with $\sum_k\alpha_k=1$, define
\[
  d_{\mathrm{conv}}(x,y)=\sum_{k=1}^n\alpha_k\,d_k(x,y).
\]

\begin{proposition}
$d_{\mathrm{conv}}$ is a quasimetric satisfying:
\begin{enumerate}[label=(\alph*)]
  \item \emph{Triangle inequality:} $d_{\mathrm{conv}}(x,y)\le d_{\mathrm{conv}}(x,z)+d_{\mathrm{conv}}(z,y)$.
  \item \emph{Positive homogeneity:} $d_{\mathrm{conv}}(\lambda x,\lambda y)=\lambda\,d_{\mathrm{conv}}(x,y)$.
\end{enumerate}
\end{proposition}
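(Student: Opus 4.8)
The plan is to verify the two quasimetric axioms for $d_{\mathrm{conv}}$ directly, using linearity and the corresponding properties assumed for each $d_k$. This is a routine closure argument: convex combinations preserve both the triangle inequality and positive homogeneity because both properties are ``linear-friendly'' (inequalities respected under nonnegative scaling and addition, equalities under scaling).

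First I would establish the triangle inequality. For arbitrary $x,y,z\in\mathbb{R}^d$, apply the triangle inequality $d_k(x,y)\le d_k(x,z)+d_k(z,y)$ for each $k$. Multiplying by $\alpha_k\ge0$ preserves the inequality, and summing over $k=1,\dots,n$ gives
\[
  d_{\mathrm{conv}}(x,y)=\sum_{k=1}^n\alpha_k d_k(x,y)\le\sum_{k=1}^n\alpha_k\bigl(d_k(x,z)+d_k(z,y)\bigr)=d_{\mathrm{conv}}(x,z)+d_{\mathrm{conv}}(z,y),
\]
where the last equality just regroups the finite sum. Second, for positive homogeneity, fix $\lambda>0$ and use $d_k(\lambda x,\lambda y)=\lambda\,d_k(x,y)$ for each $k$; then
\[
  d_{\mathrm{conv}}(\lambda x,\lambda y)=\sum_{k=1}^n\alpha_k d_k(\lambda x,\lambda y)=\sum_{k=1}^n\alpha_k\lambda\,d_k(x,y)=\lambda\sum_{k=1}^n\alpha_k d_k(x,y)=\lambda\,d_{\mathrm{conv}}(x,y).
\]
I would also remark briefly that $d_{\mathrm{conv}}$ inherits nonnegativity ($\mathbf{Q2}$) since it is a nonnegative combination of nonnegative functions, and the identity property $\mathbf{Q1}$ since $d_k(x,x)=0$ for all $k$ (these follow from positive homogeneity with $\lambda\to 0$, or directly if each $d_k$ is already known to be a quasimetric), so $d_{\mathrm{conv}}$ is a genuine quasimetric.

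There is essentially no obstacle here — the only thing to be slightly careful about is that the convexity constraint $\sum_k\alpha_k=1$ is not actually needed for either axiom (nonnegativity of the weights alone suffices), but it is what makes $d_{\mathrm{conv}}$ a natural ``averaging'' of the constituent quasimetrics and matches the $\alpha$, $1-\alpha$ weightings used earlier for $d_{\mathrm{simple}}$ and $d_{\text{IQE-mm}}$. I would state the proof in two short displayed chains as above and conclude.
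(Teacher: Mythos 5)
Your proof is correct and follows essentially the same route as the paper, which simply invokes linearity of the weighted sum together with the corresponding property of each $d_k$; you have merely written out the details the paper leaves implicit. Your added remarks on nonnegativity, the identity property, and the dispensability of the constraint $\sum_k\alpha_k=1$ are accurate but not needed to match the paper's argument.
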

\begin{proof}
Linearity of the weighted sum together with the corresponding property for each $d_k$ yields (a)--(b).
\end{proof}

\section{Derivation of Learning Objectives for Minimum Action Distance}
\label{app:full_derivation}

This appendix details the derivation of the \textit{MadDist} and \textit{TDMadDist} loss functions. The derivation begins with the foundational, but computationally intractable, constrained optimization problem for the Minimum Action Distance (MAD) and systematically transforms it into a pair of scalable learning objectives.

\subsection{Constrained Optimization Problem for MAD}

The Minimum Action Distance, $d_{\text{MAD}}$, is the solution to the following constrained optimization problem. This formulation seeks a distance function that maximizes the sum of all pairwise distances while remaining consistent with the environment's one-step transition dynamics.

\begin{align*}
    \underset{d}{\text{maximize}} \quad & \sum_{(s,s') \in \mathcal{S}^2} d(s, s') \tag{Objective 1} \\
    \text{subject to} \quad & d(s, s) = 0 \quad \forall s \in \mathcal{S} \tag{Constraint 1: Identity} \\
    & d(s, s') \le 1 \quad \forall (s, s') \in R \tag{Constraint 2: One-Step} \\
    & d(s, s') \le d(s, s'') + d(s'', s') \quad \forall (s, s', s'') \in \mathcal{S}^3 \tag{Constraint 3: Triangle Inequality}
\end{align*}

Although the formulation uses a global maximization, the solution corresponds exactly to the \textit{minimum} number of actions required to transition between states.  
The constraints enforce the correct dynamical structure:
\begin{itemize}
    \item The one-step constraint enforces that any directly connected states must lie within distance $1$, effectively \emph{pulling} them close.
    \item The triangle inequality propagates this local structure globally, ensuring consistency across multi-step paths.
    \item The maximization objective \emph{pushes} all state pairs as far apart as the constraints allow, yielding distances that exactly match the smallest number of steps connecting them.
\end{itemize}

This formulation is computationally intractable for large or continuous state spaces, primarily due to the triangle inequality (Constraint 3), which must hold for all triplets of states.

\subsection{Simplification via Quasimetric Embeddings}

To make this problem tractable, we enforce the triangle inequality \textit{by construction} rather than as an explicit constraint. We achieve this by learning a state embedding function $\phi: \mathcal{S} \to \mathbb{R}^k$ and defining the distance between any two states $s, s'$ using a \textbf{quasimetric} function $d_q$ on their embeddings:
\[ d_{\phi}(s, s') := d_q(\phi(s), \phi(s')) \]
A quasimetric function $d_q(x, y)$ satisfies the following properties by definition:
\begin{enumerate}
    \item \textbf{Identity:} $d_q(x, x) = 0$
    \item \textbf{Non-negativity:} $d_q(x, y) \ge 0$
    \item \textbf{Triangle Inequality:} $d_q(x, z) \le d_q(x, y) + d_q(y, z)$
\end{enumerate}
By defining $d_\phi$ as a quasimetric over the embedding space, the identity (Constraint 1) and triangle inequality (Constraint 3) properties are satisfied for any choice of embedding function $\phi$. This simplification is crucial, as it removes the most computationally expensive constraint and leaves us with a more manageable learning problem:

\begin{align*}
    \underset{\phi}{\text{maximize}} \quad & \sum_{(s,s') \in \mathcal{S}^2} d_q(\phi(s), \phi(s')) \\
    \text{subject to} \quad & d_q(\phi(s), \phi(s')) \le 1 \quad \forall (s, s') \in R \tag{Constraint 2: One-Step}
\end{align*}

\subsection{The \textit{MadDist} Loss Function}

We now translate this simplified problem into a loss function suitable for minimization via gradient descent. Given a dataset of state trajectories $\mathcal{D} = \{ (s_0, s_1, \dots, s_n), \dots \}$, the path length $j-i$ for any pair of states $(s_i, s_j)$ on a trajectory with $i < j$ provides a valid upper bound on the true MAD, i.e., $d_{\text{MAD}}(s_i, s_j) \le j-i$.

The \textit{MadDist} loss, $\mathcal{L} = \mathcal{L}_\tau + w_r\mathcal{L}_r + w_c\mathcal{L}_c$, is composed of three terms, each corresponding to a component of the optimization problem.

\paragraph{Term 1: The Trajectory Distance Loss ($\mathcal{L}_\tau$).}
The original goal is to maximize all pairwise distances. As a practical proxy, we formulate a loss term that is minimized when the learned distance $d_\phi(s_i, s_j)$ matches its trajectory-based upper bound, $j-i$. This encourages the learned distances to increase, directly addressing the maximization objective by using information from the dataset. We use a scale-invariant squared error to prevent long-horizon pairs from dominating the loss.
\[ \mathcal{L}_\tau = \mathbb{E}_{(s_i, s_j) \sim \mathcal{D}} \left[ \left( \frac{d_\phi(s_i, s_j)}{j-i} - 1 \right)^2 \right] \]
Minimizing $\mathcal{L}_\tau$ encourages $d_\phi(s_i, s_j) \to j-i$, serving as a proxy for the \textbf{maximize} objective.

\paragraph{Term 2: The Contrastive Loss ($\mathcal{L}_r$).}
To further support the global maximization objective, we introduce a contrastive term. We sample random pairs of states $(s, s')$ from the dataset and penalize them for having a small distance. This encourages all states to be far apart, which aligns with the goal of maximizing the sum of all distances, especially for pairs not on the same trajectory.
\[ \mathcal{L}_r = \mathbb{E}_{(s, s') \sim \mathcal{S}_\mathcal{D}} \left[ \text{relu} \left( 1 - \frac{d_\phi(s, s')}{d_{\text{max}}} \right)^2 \right] \]
where $\mathcal{S}_\mathcal{D}$ is the set of all states appearing in the dataset $\mathcal{D}$. Minimizing $\mathcal{L}_r$ incentivizes $d_\phi(s, s')$ for random pairs to approach a large value $d_{\text{max}}$, again serving the \textbf{maximize} objective.

\paragraph{Term 3: The Constraint Loss ($\mathcal{L}_c$).}
While $\mathcal{L}_{\tau}$ encourages matching the upper bound, it does not strictly enforce the inequality. We add an explicit penalty term that penalizes violations of the trajectory upper bound.
\[ \mathcal{L}_c = \mathbb{E}_{(s_i, s_j) \sim \mathcal{D}_{<H_c}} \left[ \text{relu}(d_\phi(s_i, s_j) - (j-i))^2 \right] \]
This term enforces the constraint $d_\phi(s_i, s_j) \le j-i$, which is a generalization of the one-step constraint ($d_\phi(s, s') \le 1$). The learning process finds an equilibrium where the objective terms ($\mathcal{L}_{\tau}, \mathcal{L}_r$) encourage larger distances, while this constraint term ($\mathcal{L}_c$) and the implicit triangle inequality provide regularization.

\subsection{Temporal Difference Bootstrapping (\textit{TDMadDist})}
\textit{TDMadDist} integrates principles from Temporal Difference (TD) learning. Instead of relying solely on the data-driven target $j-i$, it uses the model's own predictions to form a potentially tighter, more informed target. From the Bellman equation for shortest paths, we have $d_{\text{MAD}}(s_i, s_j) = 1 + d_{\text{MAD}}(s_{i+1}, s_j)$. We can therefore use the bootstrapped value $1 + d_{\phi'}(s_{i+1}, s_j)$ using a stable target network $\phi'$ as the new target for our objective.

The objective terms are modified as follows:

\paragraph{The TD Trajectory Distance Loss ($\mathcal{L}'_\tau$).}
The target for $d_\phi(s_i, s_j)$ becomes the minimum of the trajectory upper bound and the bootstrapped target.
\[ \mathcal{L}'_\tau = \mathbb{E}_{(s_i, s_j) \sim \mathcal{D}} \left[ \left( \frac{d_\phi(s_i, s_j)}{\min(j-i, 1 + d_{\phi'}(s_{i+1}, s_j))} - 1 \right)^2 \right] \]
Minimizing this loss still serves the \textbf{maximize} objective, but now encourages distances toward a dynamically updated target.

\paragraph{The TD Contrastive Objective ($\mathcal{L}'_r$).}
The contrastive term is modified to be consistent with the one-step Bellman logic, using a bootstrapped target.
\[ \mathcal{L}'_r = \mathbb{E}_{(s_i, s_r) \sim \mathcal{D}} \left[ \left( \frac{d_\phi(s_i, s_r)}{1 + d_{\phi'}(s_{i+1}, s_r)} - 1 \right)^2 \right] \]
The constraint loss $\mathcal{L}_c$ remains unchanged.

\section{Implementation Details}
\label{app:impl_details}

In this section, we describe the implementation details of each algorithm included in our evaluation. The source code will be publicly released at: \url{https://github.com/lorenzosteccanella/MinimumActionDistance}.

\subsection{Computer Resources}

We run all experiments on a single NVIDIA RTX 4070 GPU with 8GB of VRAM and an Intel i7-4700-HX with 32GB of RAM. 

\subsection{MadDist}

To train the MadDist distance models, we used the Adam optimizer with a learning rate of ${1}\times{10^{-4}}$, a batch size of \num{256} for the objective ($\mathcal{L}_\tau$, $\mathcal{L}_r$), and a separate batch of size \num{1024} for the constraint loss ($\mathcal{L}_c$).
For our main experiment, we used the novel simple quasimetric function and a latent dimension size of \num{512}. We include an ablation over different quasimetric functions and latent dimension sizes in Appendix~\ref{app:ablation}.

The full set of hyperparameter values used to train the MadDist models can be found in Table~\ref{tab:mad_params}.

\begin{table}[ht]
\caption{Hyperparameters used to train the MadDist algorithm.}
\label{tab:mad_params}
\centering
\begin{tabular}{ll}
\hline
\textbf{Hyperparameter}                      & \textbf{Value}                              \\ \hline
Quasimetric Function & $d_{simple}$ \\
Optimizer & AdamW~\cite{loshchilov2018adamw} \\
Learning Rate & \num{1} $\times 10^{-4}$ \\
Batch Size ($\mathcal{L}_\tau$, $\mathcal{L}_r$) & \num{256} \\
Batch Size ($\mathcal{L}_c$) & \num{1024} \\
Activation Function (Hidden Layers) & SELU \cite{klambauer2017self} \\
Neural Network & (512, 512, 256)\\
$w_r$ & 10\\
$w_c$ & 0.01 \\
$d_{max}$ & 500 \\
$H_c$ & 6 \\
\hline
\end{tabular}
\end{table}

\subsection{TDMadDist}

To train the TDMadDist distance models, we used the Adam optimizer with a learning rate of ${1}\times{10^{-4}}$, a batch size of \num{256} for the objective ($\mathcal{L}_\tau$, $\mathcal{L}_r$), and a separate batch of size \num{1024} for the constraint loss ($\mathcal{L}_c$). For our main experiment, we used the novel simple quasimetric function and a latent dimension size of \num{512}. We include an ablation over different quasimetric functions and latent dimension sizes in Appendix~\ref{app:ablation}.

For TDMadDist, we remove the hyperparameter $d_{\max}$ from the MadDist algorithm, because it is not included in TDMadDist's objective ($\mathcal{L}_r$). The temporal-difference update used when training the TDMadDist distance models involves the use of a target network, $d_{\theta'}$, which is updated using a Polyak averaging factor $\tau = 0.005$. 

The full set of hyperparameter values used to train the TDMadDist models can be found in Table~\ref{tab:tdmad_params}.

\begin{table}[ht]
    \caption{Hyperparameters used to train the TDMadDist algorithm.}
    \label{tab:tdmad_params}
    \centering
    \begin{tabular}{ll}
    \hline
    \textbf{Hyperparameter}                      & \textbf{Value}                              \\ \hline
    Quasimetric Function & $d_{simple}$ \\
    Optimizer & AdamW~\cite{loshchilov2017decoupled} \\
    Learning Rate & \num{1} $\times 10^{-4}$ \\
    Batch Size ($\mathcal{L}_{\tau}$, $\mathcal{L}_r$) & \num{256} \\
    Batch Size ($\mathcal{L}_c$) & \num{1024} \\
Activation Function (Hidden Layers) & SELU \cite{klambauer2017self} \\
    Neural Network & (512, 512, 256)\\
    $w_r$ & 1 \\
    $w_c$ & 0.01 \\
    $H_c$ & 6 \\
    $\tau$ & 0.005 \\

    \hline
    \end{tabular}
    \end{table}

\subsection{QRL}

We trained QRL distance models following the approach of \citet{wang2023optimal}. We used the Lagrangian formulation

\begin{equation}
    \min_{\theta} \max_{\lambda \geq 0} -\mathbb{E}_{s, s' \sim S_{D}}[\phi(d_{\theta}^{\text{IQE}}(s, s'))] + \lambda \left( \mathbb{E}_{(s,s') \sim p_{\text{transition}}} [\text{relu}(d_{\theta}^{\text{IQE}}(s, s') + 1)^2] \right),
\end{equation}

where $\phi(x) \triangleq -\operatorname{softplus}(\alpha - x, \beta)$ and $d_{\theta}^{\text{IQE}}(s, s')$ is the IQE distance between states $s$ and $s'$. Following \citet{wang2023optimal}, we set $(\alpha, \beta) = (15, 0.1)$ for short-horizon environments and $(\alpha, \beta) = (500, 0.01)$ for long-horizon environments. The first term in the objective maximizes the expected distance between states sampled from the dataset, while the second term penalizes distances between state--next-state pairs $(s, s')$ observed in the data.

For the neural network architecture, we used a multi-layer perceptron with an overall layer structure of $x$ - \num{512} - \num{512} - \num{128} (where $x$ is the input observation dimension). Its two hidden layers (each of size \num{512}) use ReLU activations, as described for state-based observations environments (i.e., environments with real vector observations, as opposed to images or other high-dimensional inputs) in the original paper. For the distance function, the resulting \num{128}-dimensional MLP output is fed into a separate \num{128}-\num{512}-\num{2048} projector, followed by an IQE-maxmean head with \num{64} components each of size \num{32}.

The full set of hyperparameter values used to train the QRL distance models can be found in Table~\ref{tab:qrl_params}.

\begin{table}[ht]
\caption{Hyperparameters used to train the QRL model.}
\label{tab:qrl_params}
\centering
\begin{tabular}{ll}
\hline
\textbf{Hyperparameter}                      & \textbf{Value}                              \\ \hline
Neural Network State embedding & $x$ - \num{512} - \num{512} - \num{128} \\
Neural Network IQE Projector & \num{128}-\num{512}-\num{2048} \\
Activation Function (Hidden Layers) & ReLU \cite{glorot2011deep} \\
Optimizer & Adam \cite{kingma2015adam} \\
$\lambda$ Learning Rate & \num{0.01} \\
Learning Rate Model & \num{1} $\times 10^{-4}$ \\
Batch Size & \num{1024} \\
Quasimetric function & IQE \\
IQE n components & \num{64} \\
IQE Reduction & maxmean \\ 
\hline
\end{tabular}
\end{table}

\subsection{Hilbert Representation}

A Hilbert representation model is a function $\phi: \mathcal{S} \rightarrow \mathbb{R}^d$ that embeds a state $s \in \mathcal{S}$ into a $d$-dimensional space, such that the Euclidean distance between embedded states approximates the number of actions required to transition between them under the optimal policy.

We trained Hilbert representation models following the approach of \citet{park2024foundationpolicieshilbertrepresentations}, using action-free Implicit Q-Learning~(IQL)~\citep{park2023hiql} and Hindsight Experience Replay~(HER)~\citep{andrychowicz2017hindsight}.

We used a dataset of state--next-state pairs $(s, s')$, which we relabeled using HER to produce state--next-state--goal tuples $(s, s', g)$. Goals were sampled from a geometric distribution $\text{Geom}(\gamma)$ over future states in the same trajectory with probability $0.625$, and uniformly from the entire dataset with probability $0.375$.

We trained the Hilbert representation model $\phi$ to minimize the temporal-difference loss
\begin{equation}
    \mathbb{E}\!\left[ \mathit{l}_{\tau}\!\left( \mathbf{-1}\!\left( s \neq g \right) - \gamma || \phi(s') - \phi(g) || + || \phi(s) - \phi(g) ||\right) \right],
\end{equation}
where $\mathit{l}_{\tau}$ denotes the expectile loss \citep{newey1987expectile}, an asymmetric loss function that approximates the $\max$ operator in the Bellman backup~\citep{kostrikov2022iql}. This objective naturally supports the use of target networks~\citep{mnih2015dqn} and double estimators~\citep{van2016doubledqn} to improve learning stability. We included both in our implementation, following the original setup used by \citet{park2024foundationpolicieshilbertrepresentations}.

The full set of hyperparameter values used to train the Hilbert models can be found in Table~\ref{tab:hilbert_params}.

\begin{table}[ht]
\caption{Hyperparameters used to train the Hilbert representation models.}
\label{tab:hilbert_params}
\centering
\begin{tabular}{ll}
\hline
\textbf{Hyperparameter}                      & \textbf{Value}                              \\ \hline
Latent Dimension                    & \num{32}                            \\
Expectile                           & \num{0.9}                           \\
Discount Factor                     & \num{0.99}                          \\
Learning Rate                       & \num{0.0003}                        \\
Target Network Smoothing Factor     & \num{0.005}                         \\
Multi-Layer Perceptron Dimensions                      & (\num{512}, \num{512}) Fully-Connected Layers  \\
Activation Function (Hidden Layers) & GELU~\citep{hendrycks2016gelu}      \\
Layer Normalization (Hidden Layers) & True                                \\
Activation Function (Final Layer)   & Identity                            \\
Layer Normalization (Final Layer)   & False                               \\
Optimizer                           & Adam~\citep{kingma2015adam}         \\
Batch Size                          & \num{1024}                               \\ \hline
\end{tabular}
\end{table}

\subsection{PlanDist}

We trained PlanDist models following the approach of \citet{steccanella2022state}. As described in Section \ref{sec:mad}, PlanDist learns a state embedding $\phi_\theta$ by minimizing the discrepancy between learned distances and trajectory distances $(j-i)$, while enforcing upper-bound constraints through a penalty term.

For the neural network architecture, we used a multi-layer perceptron with hidden layer sizes (\num{512}, \num{512}, \num{256}) and SELU activations~\citep{klambauer2017self}, matching the architecture used for MadDist and TDMadDist. We trained the models using the AdamW optimizer with a learning rate of ${1}\times{10^{-4}}$ and a batch size of \num{256}. We set the constraint weight factor to $w_c = 0.01$.

For the standard PlanDist baseline, we used the L1 (Manhattan) distance in latent space. In addition, we evaluated a variant of PlanDist using the simple quasimetric function ($d_{\text{simple}}$) in place of the L1 distance.

The full set of hyperparameter values used to train the PlanDist models can be found in Table~\ref{tab:plandist_params}.

\begin{table}[ht]
\caption{Hyperparameters used to train the PlanDist models.}
\label{tab:plandist_params}
\centering
\begin{tabular}{ll}
\hline
\textbf{Hyperparameter} & \textbf{Value} \\ \hline
Distance Function & L1 / $d_{simple}$ \\
Optimizer & AdamW~\citep{loshchilov2018adamw} \\
Learning Rate & \num{1} $\times 10^{-4}$ \\
Batch Size & \num{256} \\
Activation Function (Hidden Layers) & SELU~\citep{klambauer2017self} \\
Neural Network & (512, 512, 256) \\
$w_c$ & 0.01 \\
\hline
\end{tabular}
\end{table}

\section{Ablation Study}
\label{app:ablation}

In this section, we present additional ablation studies to analyze the performance of our proposed methods. We evaluate the impact of different hyperparameters and design choices on the performance of the learned embeddings.

We conduct experiments in the CliffWalking environment, which is a highly asymmetric environment with a known ground-truth MAD. For each experiment we train the \textit{MadDist} algorithm using the same hyperparameters from the main experiments, varying only the hyperparameter of interest while keeping all others fixed. We then evaluate the learned embeddings using Spearman correlation, Pearson correlation, and Ratio CV metrics.

\subsection{Effect of Latent Dimension on MAD Accuracy}

\begin{figure}[htbp]
    \centering

    \begin{subfigure}{0.32\textwidth}
        \centering
        \includesvg[width=\linewidth]{images/final_results/cliff_multi_latent_s}
    \end{subfigure}
    \begin{subfigure}{0.32\textwidth}
        \centering
        \includesvg[width=\linewidth]{images/final_results/cliff_multi_latent_c}
    \end{subfigure}
    \begin{subfigure}{0.32\textwidth}
        \centering
        \includesvg[width=\linewidth]{images/final_results/cliff_multi_latent_r}
    \end{subfigure}

    \caption{Impact of latent size on Spearman correlation, Pearson correlation and Ratio CV of the MadDist and TDMadDist algorithms, evaluated in the CliffWalking environment. Shaded regions show the range of values across five random seeds, with upper and lower boundaries representing maximum and minimum values.}
    \label{fig:latent-size}
\end{figure}

Figure~\ref{fig:latent-size} shows the impact of the latent dimension size on the performance of our proposed methods. We can see that increasing the latent dimension size improves the performance of our methods. We note that the performance starts to saturate after a latent dimension size of 10, but larger latent dimension sizes still slightly improve the performance and do not harm the performance. This is likely due to the fact that larger latent dimension sizes allow for more expressive representations, which can help to better capture the underlying structure of the environment.

\subsection{Effect of Quasimetric Choice on MAD Accuracy}
\begin{figure}[htbp]
    \centering

    \begin{subfigure}{0.32\textwidth}
        \centering
        \includesvg[width=\linewidth]{images/final_results/mad_cliffwalking_comp_s}
    \end{subfigure}
    \begin{subfigure}{0.32\textwidth}
        \centering
        \includesvg[width=\linewidth]{images/final_results/mad_cliffwalking_comp_c}
    \end{subfigure}
    \begin{subfigure}{0.32\textwidth}
        \centering
        \includesvg[width=\linewidth]{images/final_results/mad_cliffwalking_comp_r}
    \end{subfigure}

    \caption{Impact of different quasimetric functions on correlation and Ratio CV of the MadDist algorithm, evaluated in the CliffWalking environment. Shaded regions show the range of values across five random seeds, with upper and lower boundaries representing maximum and minimum values.}
    \label{fig:mad_quasimetric_choice}
\end{figure}

\begin{figure}[htbp]
    \centering

    \begin{subfigure}{0.32\textwidth}
        \centering
        \includesvg[width=\linewidth]{images/final_results/tdmad_cliffwalking_comp_s}
    \end{subfigure}
    \begin{subfigure}{0.32\textwidth}
        \centering
        \includesvg[width=\linewidth]{images/final_results/tdmad_cliffwalking_comp_c}
    \end{subfigure}
    \begin{subfigure}{0.32\textwidth}
        \centering
        \includesvg[width=\linewidth]{images/final_results/tdmad_cliffwalking_comp_r}
    \end{subfigure}

    \caption{Impact of different quasimetric functions on correlation and Ratio CV of the TDMadDist algorithm, evaluated in the CliffWalking environment. Shaded regions show the range of values across five random seeds, with upper and lower boundaries representing maximum and minimum values.}
    \label{fig:tdmad_quasimetric_choice}
\end{figure}

Figure~\ref{fig:mad_quasimetric_choice} shows the impact of different quasimetric functions on the performance of the learned MadDist model. The novel simple quasimetric (MadDistance-Simple) achieves the best performance, outperforming both the Wide Norm (MadDistance-WideNorm) and IQE (MadDistance-IQE) variants. While Wide Norm and IQE perform similarly to each other, they consistently underperform the simple quasimetric across all three evaluation metrics.

Figure~\ref{fig:tdmad_quasimetric_choice} presents the same ablation over quasimetric functions, now applied to learning the TDMadDist model. The results mirror the previous setting: the simple quasimetric (TDMadDist-Simple) again achieves the strongest performance, while the Wide Norm (TDMadDist-WideNorm) and IQE (TDMadDist-IQE) variants lag slightly behind and show comparable results to each other.

In this experiment, we used a latent dimension size of \num{256}. For the Wide Norm quasimetric, we configure the model with \num{32} components, each having an output component size of \num{32}. For the IQE quasimetric, we set each component to have a dimensionality of \num{16}. For both quasimetric functions we use maxmean reduction \citep{pitis2020inductive}.

\subsection{Effect of Dataset Size on MAD Accuracy}
\begin{figure}[htbp]
    \centering

    \begin{subfigure}{0.32\textwidth}
        \centering
        \includesvg[width=\linewidth]{images/final_results/cliff_dataset_size_s}
    \end{subfigure}
    \begin{subfigure}{0.32\textwidth}
        \centering
        \includesvg[width=\linewidth]{images/final_results/cliff_dataset_size_c}
    \end{subfigure}
    \begin{subfigure}{0.32\textwidth}
        \centering
        \includesvg[width=\linewidth]{images/final_results/cliff_dataset_size_r}
    \end{subfigure}

    \caption{Impact of dataset size on Spearman correlation, Pearson correlation and Ratio CV of the MadDist and TDMadDist algorithms, evaluated in the CliffWalking environment. Shaded regions show the range of values across five random seeds, with upper and lower boundaries representing maximum and minimum values.}
    \label{fig:dataset-size}
\end{figure}

Figure \ref{fig:dataset-size} illustrates how dataset size affects the performance of our proposed methods. As the number of trajectories increases, the dataset provides broader coverage of all the possible transitions in the environment, leading to a more accurate approximation of the MAD.

\subsection{Neural Network Size Choice for QRL and Hilbert}

In this section, we present ablation studies examining how the size of the neural network affects performance for both QRL and Hilbert. For QRL, we evaluate three architectures, each consisting of an embedding network followed by a projection network used with the IQE quasimetric as described in \cite{wang2023optimal}:

\begin{itemize}
\item QRL\_nn\_1: (512, 512, 128) embedding + (128, 512, 2048) projection.
\item QRL\_nn\_2: (512, 512, 256, 128) embedding + (128, 512, 2048) projection.
\item QRL\_nn\_3: (1024, 1024, 256) embedding + (1024, 1024, 1024, 2048) projection.
\end{itemize}

QRL\_nn\_1 corresponds to the architecture used for state-based observations in \cite{wang2023optimal}, while QRL\_nn\_2 shares the same embedding network as MAD and TDMAD. QRL\_nn\_3 represents the larger architecture considered in \cite{wang2023optimal}. As shown in Figure~\ref{fig:qrl-nn-size}, performance differences across these architectures are minor, with QRL\_nn\_1 achieving the best results.

\begin{figure}[htbp]
\centering
\begin{subfigure}{0.32\textwidth}
\centering
\includesvg[width=\linewidth]{images/final_results/iclr_results/qrl_cliffwalking_nn_comp_s}
\end{subfigure}
\begin{subfigure}{0.32\textwidth}
\centering
\includesvg[width=\linewidth]{images/final_results/iclr_results/qrl_cliffwalking_nn_comp_c}
\end{subfigure}
\begin{subfigure}{0.32\textwidth}
\centering
\includesvg[width=\linewidth]{images/final_results/iclr_results/qrl_cliffwalking_nn_comp_r}
\end{subfigure}
\caption{Impact of neural network size on Spearman correlation, Pearson correlation, and Ratio CV for the QRL algorithm in the CliffWalking environment. Shaded regions show the range of values across five random seeds, with upper and lower boundaries representing maximum and minimum values.}
\label{fig:qrl-nn-size}
\end{figure}

For the Hilbert algorithm, we compare two fully connected architectures:

\begin{itemize}
\item HILBERT\_nn\_1: (512, 512), as used in the original paper \cite{park2024foundationpolicieshilbertrepresentations}.
\item HILBERT\_nn\_2: (512, 512, 256, 128), matching the architecture used for MAD and TDMAD.
\end{itemize}

As shown in Figure~\ref{fig:hilbert-nn-size}, HILBERT\_nn\_1 performs best in our evaluation.

\begin{figure}[htbp]
\centering
\begin{subfigure}{0.32\textwidth}
\centering
\includesvg[width=\linewidth]{images/final_results/iclr_results/hilbert_cliffwalking_nn_comp_s}
\end{subfigure}
\begin{subfigure}{0.32\textwidth}
\centering
\includesvg[width=\linewidth]{images/final_results/iclr_results/hilbert_cliffwalking_nn_comp_c}
\end{subfigure}
\begin{subfigure}{0.32\textwidth}
\centering
\includesvg[width=\linewidth]{images/final_results/iclr_results/hilbert_cliffwalking_nn_comp_r}
\end{subfigure}
\caption{Impact of neural network size on Spearman correlation, Pearson correlation, and Ratio CV for the Hilbert algorithm in the CliffWalking environment. Shaded regions show the range of values across five random seeds, with upper and lower boundaries representing maximum and minimum values.}
\label{fig:hilbert-nn-size}
\end{figure}

\section{Ablation Study on the Constraint Horizon and Contrastive Weight}
\label{app:ablation_loss_hyper}

We performed an ablation study analysing the effect of the constraint horizon parameter ($H_c$) and the contrastive regularization weight ($w_r$) across two OGBench environments: AntMaze Navigate, which uses expert-generated trajectories, and AntMaze Explore, which uses trajectories collected by a random policy.

Figures \ref{fig:ablation_w_r_1} and \ref{fig:ablation_w_r_2} report the relative performance change as a function of the contrastive weight ($w_r$). Across both environments, increasing ($w_r$) consistently improves performance over the baseline setting ($w_r = 0$), indicating that the contrastive objective provides a complementary learning signal beyond ($L_\tau$).

The gains are substantially larger in AntMaze Explore, where the trajectories are less structured and contain weaker supervision signals. In this setting, larger values of ($w_r$) lead to stronger and more consistent improvements across all ground-truth distance ranges. In contrast, in AntMaze Navigate, where trajectories already contain coherent expert behavior, the improvements remain positive but comparatively smaller. These results suggest that the contrastive objective is particularly valuable in low-quality or highly stochastic data regimes, where trajectory information alone is insufficient to accurately recover the geometry of the state space.

We additionally evaluate the influence of the constraint horizon parameter ($H_c$). Figure \ref{fig:ablation_H_c} shows the corresponding ablation on AntMaze Explore. Increasing ($H_c$) consistently improves performance, particularly for state pairs separated by shorter ground-truth distances. This suggests that enforcing upper-bound consistency over a larger subset of trajectory pairs helps stabilize the local geometric structure of the learned embedding and improves the propagation of distance information across the representation space.

\begin{figure}[hbtp]
    \centering
    \begin{subfigure}{\linewidth}\centering
        \includesvg[width=0.6\linewidth]{images_camera_ready/ablation/am_medium_navigate_pearson_weight_objective_2}
    \end{subfigure}
    \begin{subfigure}{\linewidth}\centering
        \includesvg[width=0.6\linewidth]{images_camera_ready/ablation/am_medium_navigate_spearman_weight_objective_2}
    \end{subfigure}
    \begin{subfigure}{\linewidth}\centering
        \includesvg[width=0.6\linewidth]{images_camera_ready/ablation/am_medium_navigate_ratio_cv_weight_objective_2}
    \end{subfigure}
    
    \caption{Impact of $w_r$ on Pearson and Spearman correlation coefficients and coefficient of variation (CV) ratios on ogbench AntMaze Medium Navigate}
    \label{fig:ablation_w_r_1}    
\end{figure}

\begin{figure}[hbtp]
    \centering
    \begin{subfigure}{\linewidth}\centering
        \includesvg[width=0.6\linewidth]{images_camera_ready/ablation/am_medium_explore_pearson_weight_objective_2}
    \end{subfigure}
    \begin{subfigure}{\linewidth}\centering
        \includesvg[width=0.6\linewidth]{images_camera_ready/ablation/am_medium_explore_spearman_weight_objective_2}
    \end{subfigure}
    \begin{subfigure}{\linewidth}\centering
        \includesvg[width=0.6\linewidth]
        {images_camera_ready/ablation/am_medium_explore_ratio_cv_weight_objective_2}
    \end{subfigure}
    
    \caption{Impact of $w_r$ on Pearson and Spearman correlation coefficients and coefficient of variation (CV) ratios on ogbench AntMaze Medium Explore}
    \label{fig:ablation_w_r_2}    
\end{figure}

\begin{figure}[hbtp]
    \centering
    \begin{subfigure}{\linewidth}\centering
        \includesvg[width=0.6\linewidth]{images_camera_ready/ablation/am_medium_explore_pearson_max_dist_con}
    \end{subfigure}
    \begin{subfigure}{\linewidth}\centering
        \includesvg[width=0.6\linewidth]{images_camera_ready/ablation/am_medium_explore_spearman_max_dist_con}
    \end{subfigure}
    \begin{subfigure}{\linewidth}\centering
        \includesvg[width=0.6\linewidth]
        {images_camera_ready/ablation/am_medium_explore_ratio_cv_max_dist_con}
    \end{subfigure}
    
    \caption{Impact of $H_c$ on Pearson and Spearman correlation coefficients and coefficient of variation (CV) ratios on ogbench AntMaze Medium Explore}
    \label{fig:ablation_H_c}    
\end{figure}

\newpage

\section{Complete list of results}
\label{app:full_results}
In this section, we present the complete list of results produced across the full suite of evaluation environments included in our study. We also present a qualitative analysis of the MAD metric learned in the MediumMaze environment.

\subsection{Qualitative Evaluation}
In this section, we present a qualitative evaluation of the MadDist algorithm in the MediumMaze environment by visualizing the learned geometry of the state space to demonstrate how the metric captures the underlying structure of the maze. As shown in Figure~\ref{fig:supp_landscape_results}, the learned metric aligns well with the structure induced by the maze's walls and faithfully captures the minimum action distance between points over both short and long horizons.

\begin{figure}[hbtp]
    \centering
    \includegraphics[scale=0.2]{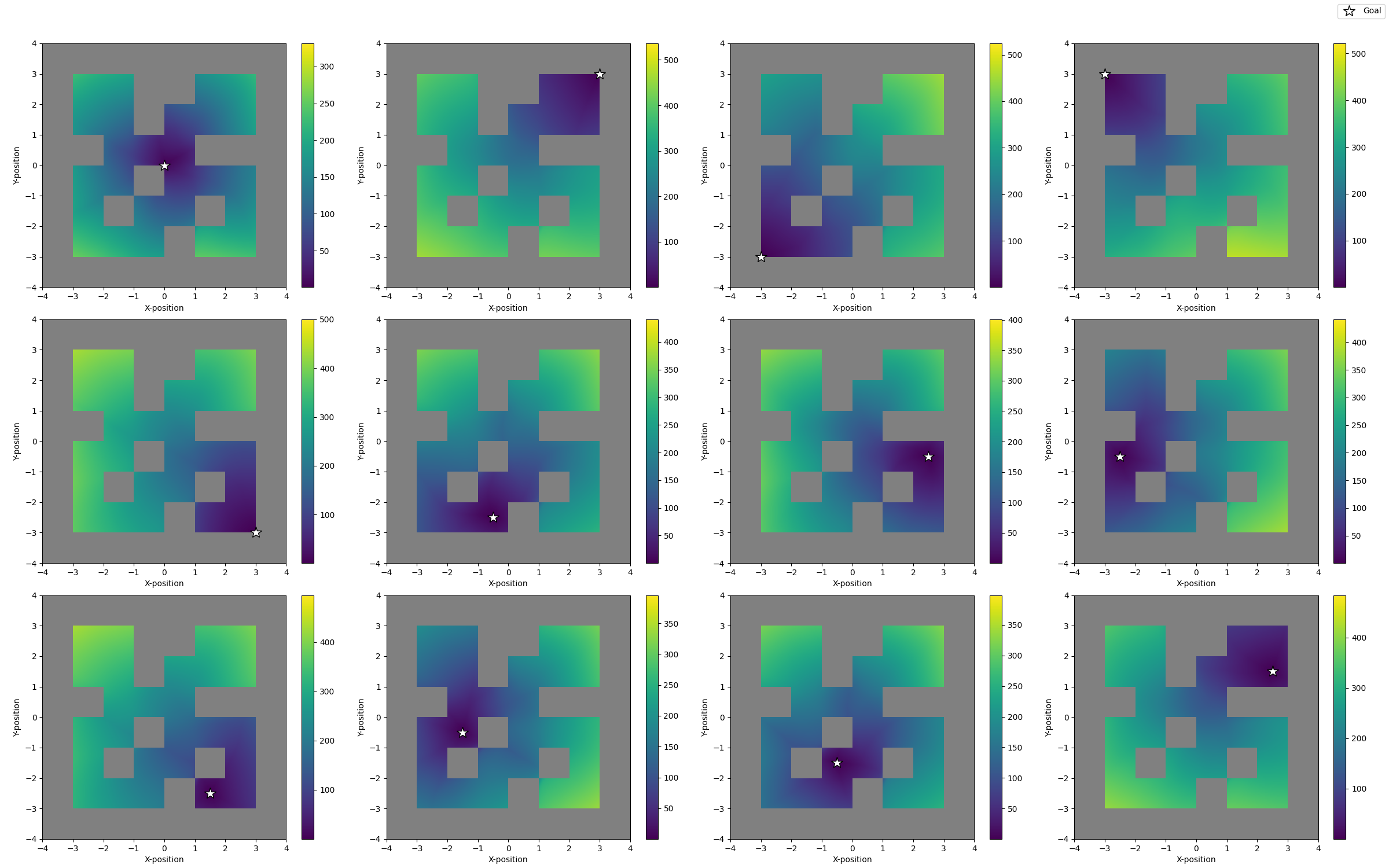}
    \caption{Visualization of the learned MAD landscape using MadDist on the MediumMaze environment. The heatmap represents the predicted distance from a fixed goal state to every other point in the maze}
    \label{fig:supp_landscape_results}
\end{figure}

\subsection{Full Distance Learning Results}
In this section, we present the full set of results evaluating how accurately the proposed MadDist and TDMadDist approaches learn the minimum action distance across all environments used in our evaluation, compared to the QRL and Hilbert baselines. Accuracy is measured using Spearman and Pearson correlation coefficients, together with the Ratio Coefficient of Variation (Ratio CV). This full set of results can be seen in Figures~\ref{fig:supp_results}, ~\ref{fig:supp_results2}, and \ref{fig:supp_results3}. 

\begin{figure}[hbtp]
    \centering
    \begin{subfigure}{\linewidth}
        \includesvg[width=0.31\linewidth]{images_camera_ready/EmptyGridWorld_c}
        \hfill
        \includesvg[width=0.31\linewidth]{images_camera_ready/EmptyGridWorld_s}
        \hfill
        \includesvg[width=0.31\linewidth]{images_camera_ready/EmptyGridWorld_r}
    \end{subfigure}
    
    \begin{subfigure}{\linewidth}
        \includesvg[width=0.31\linewidth]{images_camera_ready/KeyDoor_c}
        \hfill
        \includesvg[width=0.31\linewidth]{images_camera_ready/KeyDoor_s}
        \hfill
        \includesvg[width=0.31\linewidth]{images_camera_ready/KeyDoor_r}
    \end{subfigure}

    \begin{subfigure}{\linewidth}
        \includesvg[width=0.31\linewidth]{images_camera_ready/CliffWalking_c}
        \hfill
        \includesvg[width=0.31\linewidth]{images_camera_ready/CliffWalking_s}
        \hfill
        \includesvg[width=0.31\linewidth]{images_camera_ready/CliffWalking_r}
    \end{subfigure}

    \begin{subfigure}{\linewidth}
        \includesvg[width=0.31\linewidth]{images_camera_ready/Umaze_c}
        \hfill
        \includesvg[width=0.31\linewidth]{images_camera_ready/Umaze_s}
        \hfill
        \includesvg[width=0.31\linewidth]{images_camera_ready/Umaze_r}
    \end{subfigure}

    \begin{subfigure}{\linewidth}
        \includesvg[width=0.31\linewidth]{images_camera_ready/Mediummaze_c}
        \hfill
        \includesvg[width=0.31\linewidth]{images_camera_ready/Mediummaze_s}
        \hfill
        \includesvg[width=0.31\linewidth]{images_camera_ready/Mediummaze_r}
    \end{subfigure}
    \caption{Pearson and Spearman correlation coefficients and coefficient of variation (CV) ratios across the NoisyGridWorld, KeyDoorGridWorld, CliffWalking, UMaze, and MediumMaze environments. Shaded regions show the range of values across five random seeds, with upper and lower boundaries representing maximum and minimum values.}
    \label{fig:supp_results}    
\end{figure}

\begin{figure}[hbtp]
    \centering
    \begin{subfigure}{\linewidth}
        \includesvg[width=0.31\linewidth]{images_camera_ready/ogbench-pm-medium-navigate_c}
        \hfill
        \includesvg[width=0.31\linewidth]{images_camera_ready/ogbench-pm-medium-navigate_s}
        \hfill
        \includesvg[width=0.31\linewidth]{images_camera_ready/ogbench-pm-medium-navigate_r}
    \end{subfigure}

    \begin{subfigure}{\linewidth}
        \includesvg[width=0.31\linewidth]{images_camera_ready/ogbench-pm-medium-stitch_c}
        \hfill
        \includesvg[width=0.31\linewidth]{images_camera_ready/ogbench-pm-medium-stitch_s}
        \hfill
        \includesvg[width=0.31\linewidth]{images_camera_ready/ogbench-pm-medium-stitch_r}
    \end{subfigure}

    \begin{subfigure}{\linewidth}
        \includesvg[width=0.31\linewidth]{images_camera_ready/ogbench-pm-large-navigate_c}
        \hfill
        \includesvg[width=0.31\linewidth]{images_camera_ready/ogbench-pm-large-navigate_s}
        \hfill
        \includesvg[width=0.31\linewidth]{images_camera_ready/ogbench-pm-large-navigate_r}
    \end{subfigure}

    \begin{subfigure}{\linewidth}
        \includesvg[width=0.31\linewidth]{images_camera_ready/ogbench-pm-large-stitch_c}
        \hfill
        \includesvg[width=0.31\linewidth]{images_camera_ready/ogbench-pm-large-stitch_s}
        \hfill
        \includesvg[width=0.31\linewidth]{images_camera_ready/ogbench-pm-large-stitch_r}
    \end{subfigure}

    \begin{subfigure}{\linewidth}
        \includesvg[width=0.31\linewidth]{images_camera_ready/ogbench-pm-giant-navigate_c}
        \hfill
        \includesvg[width=0.31\linewidth]{images_camera_ready/ogbench-pm-giant-navigate_s}
        \hfill
        \includesvg[width=0.31\linewidth]{images_camera_ready/ogbench-pm-giant-navigate_r}
    \end{subfigure}

    \begin{subfigure}{\linewidth}
        \includesvg[width=0.31\linewidth]{images_camera_ready/ogbench-pm-giant-stitch_c}
        \hfill
        \includesvg[width=0.31\linewidth]{images_camera_ready/ogbench-pm-giant-stitch_s}
        \hfill
        \includesvg[width=0.31\linewidth]{images_camera_ready/ogbench-pm-giant-stitch_r}
    \end{subfigure}

        \caption{Pearson and Spearman correlation coefficients and coefficient of variation (CV) ratios across OGBench PointMaze test environments. Shaded regions show the range of values across five random seeds, with upper and lower boundaries representing maximum and minimum values.}
    \label{fig:supp_results2}    
\end{figure}

\begin{figure}[hbtp]
    \centering

    \begin{subfigure}{\linewidth}
        \includesvg[width=0.31\linewidth]{images_camera_ready/antmaze-medium-navigate-v0_c.svg}
        \hfill
        \includesvg[width=0.31\linewidth]{images_camera_ready/antmaze-medium-navigate-v0_s.svg}
        \hfill
        \includesvg[width=0.31\linewidth]{images_camera_ready/antmaze-medium-navigate-v0_r.svg}
    \end{subfigure}

    \begin{subfigure}{\linewidth}
        \includesvg[width=0.31\linewidth]{images_camera_ready/antmaze-medium-stitch-v0_c.svg}
        \hfill
        \includesvg[width=0.31\linewidth]{images_camera_ready/antmaze-medium-stitch-v0_s.svg}
        \hfill
        \includesvg[width=0.31\linewidth]{images_camera_ready/antmaze-medium-stitch-v0_r.svg}
    \end{subfigure}

    \begin{subfigure}{\linewidth}
        \includesvg[width=0.31\linewidth]{images_camera_ready/antmaze-medium-explore-v0_c.svg}
        \hfill
        \includesvg[width=0.31\linewidth]{images_camera_ready/antmaze-medium-explore-v0_s.svg}
        \hfill
        \includesvg[width=0.31\linewidth]{images_camera_ready/antmaze-medium-explore-v0_r.svg}
    \end{subfigure}

    \begin{subfigure}{\linewidth}
        \includesvg[width=0.31\linewidth]{images_camera_ready/antmaze-teleport-v0-det_c.svg}
        \hfill
        \includesvg[width=0.31\linewidth]{images_camera_ready/antmaze-teleport-v0-det_s.svg}
        \hfill
        \includesvg[width=0.31\linewidth]{images_camera_ready/antmaze-teleport-v0-det_r.svg}
    \end{subfigure}

        \caption{Pearson and Spearman correlation coefficients and coefficient of variation (CV) ratios across OGBench AntMaze test environments. Shaded regions show the range of values across five random seeds, with upper and lower boundaries representing maximum and minimum values.}
    \label{fig:supp_results3}    
\end{figure}

\newpage
\section{Environments}
\label{app:environments}

Our test environments were specifically chosen to span a comprehensive range of reward-free MDP characteristics and challenges, ensuring a thorough evaluation. Key design considerations for this suite include:

\begin{itemize}[leftmargin=0.5cm]
    \item \textit{Noisy Observations:} To assess robustness to imperfect state information, which can challenge algorithms relying on precise state identification.
    \item \textit{Stochastic Dynamics:} To evaluate if our algorithm can retrieve the MAD even when transitions are not deterministic. This reflects real-world scenarios where environments have inherent randomness or agent actions have uncertain outcomes.
    \item \textit{Asymmetric Transitions:} To test the capability of our algorithm to learn true quasimetric distances that capture directional dependencies (e.g., one-way paths, key-door mechanisms).
    \item State Spaces:
        \begin{itemize}
            \item \textit{Continuous State Spaces:} To demonstrate applicability to problems with real-valued state representations where function approximation is essential.
            \item \textit{Discrete State Spaces:} To provide foundational testbeds with clearly defined structures and allow for exact MAD computation.
        \end{itemize}
    \item Action Spaces:
        \begin{itemize}
            \item \textit{Continuous Action Spaces:} To evaluate performance in environments where actions are defined by real-valued parameters, common in robotics and physical control tasks.
            \item \textit{Discrete Action Spaces:} To ensure applicability to environments with a finite set of distinct actions.
        \end{itemize}
    \item \textit{Complex Dynamics:} Incorporating environments like PointMaze, which feature non-trivial physics (velocity, acceleration).
    \item \textit{Hard Exploration:} Utilizing environments with complex structures (e.g., intricate mazes) that pose significant exploration challenges for naive data collection policies (like the random policy we used in our experiments).
\end{itemize}

\subsection*{NoisyGridWorld} 
\textit{Noisy Observations, Stochastic Dynamics, Continuous State Space, Discrete Action Space}

\begin{itemize}[leftmargin=0.5cm]
    \item \textbf{State space:} The agent receives a 4-dimensional observation vector $(x, y, n_1, n_2)$at each step. In this observation, $(x, y)$ are discrete coordinates in a $13 \times 13$ grid, and $(n_1, n_2) \sim \mathcal{N}(0, \sigma^2 I)$ are i.i.d. Gaussian noise components. The true underlying latent state, which is not directly observed by the agent in its entirety without noise, is the coordinate pair $(x, y)$. The presence of the noise components $(n_1, n_2)$ in the observation makes the sequence of observations non-Markovian with respect to this true latent state.

    \item \textbf{Action space:} Four stochastic actions are available in all states: \texttt{UP}, \texttt{DOWN}, \texttt{LEFT}, and \texttt{RIGHT}.

    \item \textbf{Transition dynamics:} With probability 0.5, the intended action is executed; with probability 0.5, a random action is applied. Transitions are clipped at grid boundaries.

    \item \textbf{Initial state distribution ($\mu_0$):} The agent's initial true latent state $(x_0, y_0)$ is a random real-valued position sampled uniformly from the grid. The full initial observation is $(x_0, y_0, n_{1,0}, n_{2,0})$, where the initial noise components $(n_{1,0}, n_{2,0})$ are also sampled i.i.d. from $\mathcal{N}(0, \sigma^2 I)$. The real-valued nature of both the initial position and the noise components makes the observed state space continuous.

    \item \textbf{Ground-truth MAD:} Since the latent state is deterministic apart from noise, the MAD between two states $(x_1, y_1)$ and $(x_2, y_2)$ is the Manhattan distance $|x_1 - x_2| + |y_1 - y_2|$. Noise components are ignored.
\end{itemize}

\subsection*{KeyDoorGridWorld}
\textit{Asymmetric, Deterministic Dynamics, Discrete State Space, Discrete Action Space}

\begin{itemize}[leftmargin=0.5cm]
    \item \textbf{State space:} States are triples $(x, y, k)$, where $(x, y)$ is the agent’s position in a $13 \times 13$ grid, and $k \in \{0, 1\}$ indicates whether the key has been collected.

    \item \textbf{Action space:} Four deterministic actions are available in all states: \texttt{UP}, \texttt{DOWN}, \texttt{LEFT}, and \texttt{RIGHT}.

    \item \textbf{Transition dynamics:} Transitions are deterministic. The agent picks up the key by visiting the key’s cell; the key cannot be dropped once collected. The door can only be passed if the key has been collected.

    \item \textbf{Initial state distribution ($\mu_0$):} The agent starts at position $(1, 1)$.

    \item \textbf{Ground-truth MAD:} Defined as the minimum number of steps to reach the target state, accounting for key dependencies. For example, if the agent lacks the key and the goal requires it, the path must include visiting the key first.
\end{itemize}

\subsection*{CliffWalking}
\textit{Asymmetric, Deterministic Dynamics, Discrete State Space, Discrete Action Space}

\begin{itemize}[leftmargin=0.5cm]
    \item \textbf{State space:} The environment is a $4 \times 12$ grid. Each state corresponds to a discrete cell $(x, y)$.

    \item \textbf{Action space:} Four deterministic actions are available in all states: \texttt{UP}, \texttt{DOWN}, \texttt{LEFT}, or \texttt{RIGHT}.

    \item \textbf{Transition dynamics:} Transitions are deterministic unless the agent steps into a cliff cell, in which case it is returned to the start. The episode is not reset.

    \item \textbf{Initial state distribution ($\mu_0$):} The agent starts at position $(1, 1)$.

    \item \textbf{Ground-truth MAD:} The MAD is the minimal number of steps required to reach the target state, allowing for cliff transitions. Since falling into the cliff resets the agent's position, it can create shortcuts and lead to strong asymmetries in the distance metric.
\end{itemize}

\subsection*{PointMaze}
\textit{Continuous State Space, Complex Dynamics, Hard exploration, Continuous Action Space}

\begin{itemize}[leftmargin=0.5cm]
    \item \textbf{State space:} The agent observes a 4-dimensional vector $(x, y, \dot{x}, \dot{y})$, where $(x, y)$ is the position of a green ball in a 2D maze and $(\dot{x}, \dot{y})$ are its linear velocities in the $x$ and $y$ directions, respectively.

    \item \textbf{Action space:} Continuous control inputs $(a_x, a_y)$ corresponding to applied forces in the $x$ and $y$ directions. The applied force is limited to the range $[-1, 1]$~\si{\newton} in each direction.

    \item \textbf{Transition dynamics:} The system follows simple force-based dynamics within the MuJoCo physics engine. The applied forces affect the agent’s velocity, which in turn updates its position. The ball's velocity is limited to the range $[-5, 5]$~\si{\meter/\second} in each direction. Collisions with the maze's walls are inelastic: any attempted movement through a wall is blocked.

    \item \textbf{Initial state distribution ($\mu_0$):} The agent starts at a random real-valued position $(x, y)$ sampled uniformly from valid maze locations. The initial velocities $(\dot{x}_0, \dot{y}_0)$ are set to $(0, 0)$.

    \item \textbf{Ground-truth MAD:} The maze is discretized into a uniform grid. Using the Floyd-Warshall algorithm on the resulting connectivity graph, we compute shortest path distances between all reachable pairs of positions.
\end{itemize}

\subsection*{OGBench PointMaze}
\textit{Continuous State Space, Complex Dynamics, Hard Exploration, Continuous Action Space}

This benchmark extends the \texttt{PointMaze} environment to significantly larger and more challenging mazes, designed to test long-horizon reasoning and exploration capabilities. The controlled agent is the same 2D ball as in \texttt{PointMaze}, but the scale and complexity of the mazes increase substantially.

\begin{itemize}[leftmargin=0.5cm]
    \item \textbf{Medium:} Matches the original medium maze from D4RL.
    \item \textbf{Large:} Matches the original large maze from D4RL.
    \item \textbf{Giant:} Twice the size of \texttt{Large}, with a layout adapted from the \texttt{antmaze-ultra} maze of \cite{jiang2022efficient}. It contains longer paths, requiring up to 1000 environment steps, making it especially demanding for long-horizon planning.
\end{itemize}

Two datasets are provided for each maze:
\begin{itemize}[leftmargin=0.5cm]
    \item \textbf{Navigate:} Collected using a noisy expert policy that repeatedly navigates to randomly sampled goals throughout the maze.
    \item \textbf{Stitch:} Consists of short, goal-reaching trajectories of at most 4 cell units in length. Solving tasks requires stitching together multiple short demonstrations (up to 8), testing the agent’s ability to compose behaviors across long horizons.
\end{itemize}

\subsection*{OGBench AntMaze}
\textit{Continuous State Space, Complex Dynamics, Hard Exploration, Continuous Action Space}

This environment represents a high-dimensional extension of the \texttt{PointMaze} task, replacing the 2D ball with a complex 8-jointed quadruped ("Ant") robot. It combines the long-horizon navigation challenges of large-scale mazes with the difficult locomotion dynamics of a torque-controlled legged robot.

\begin{itemize}[leftmargin=0.5cm]
\item \textbf{State space:} The agent receives a 29-dimensional observation vector, which includes the Cartesian position and velocity of the torso, as well as the angles and velocities of the eight leg joints.
\item \textbf{Action space:} An 8-dimensional continuous action space corresponding to the torque applied to each of the quadruped's joints.
\item \textbf{Transition dynamics:} The environment uses the MuJoCo physics engine to simulate the complex contact dynamics and articulation of the Ant robot. Successful movement requires the agent to coordinate joint torques to maintain balance and achieve locomotion while navigating around obstacles.
\item \textbf{Datasets:} Three distinct datasets are provided to evaluate different aspects of offline learning:
\begin{itemize}
\item \textbf{Navigate:} Generated by a noisy expert policy navigating to random goals.
\item \textbf{Stitch:} Composed of short, localized goal-reaching trajectories that must be concatenated to solve long-horizon tasks.
\item \textbf{Explore:} Collected using a purely random policy. This dataset contains no goal-directed behavior, serving as a rigorous test for an agent's ability to extract the underlying distance metric from undirected exploration data.
\end{itemize}
\item \textbf{Ground-truth MAD:} Similar to the PointMaze environments, the ground-truth MAD is approximated by discretizing the maze layout and calculating the shortest-path distances through the grid using the Floyd-Warshall algorithm.
\end{itemize}

\subsection*{OGBench AntMaze Teleport}
\textit{Continuous State Space, Complex Dynamics, Asymmetric Transitions, Continuous Action Space}

This environment extends \texttt{OGBench AntMaze} by introducing deterministic \textit{teleportation}: at two designated trigger cells (\textit{black holes}), the agent is instantly relocated to a fixed destination cell (\textit{white hole}). The state and action spaces are identical to \texttt{OGBench AntMaze}.

\begin{itemize}[leftmargin=0.5cm]
    \item \textbf{Teleport topology:} There are two black-hole cells at grid positions $(4,6)$ and $(5,1)$, and three white-hole cells at $(1,7)$, $(6,1)$, and $(6,10)$. The deterministic mapping is fixed as $(4,6)\!\to\!(1,7)$ and $(5,1)\!\to\!(6,10)$.

    \item \textbf{Asymmetry:} Traversing a teleport edge costs 1 step in the black$\to$white direction, while the reverse path requires navigating around the maze, potentially incurring many more steps. This produces strong asymmetries in the ground-truth MAD that a symmetric metric cannot capture.

    \item \textbf{Dataset:} We load the stochastic \texttt{antmaze-teleport-explore-v0} dataset from OGBench and curate it to be consistent with the deterministic mapping: any episode containing a transition through the wrong white hole is discarded entirely. The remaining trajectories reflect only the fixed teleport mapping.

    \item \textbf{Ground-truth MAD:} Computed via BFS on the maze grid with directed teleport edges — the black$\to$white hop counts as one step, but no reverse edge is added. This directly encodes the asymmetric reachability structure into the ground truth.
\end{itemize}

\section{Planning Experiments}
\label{app:planning}

To assess the practical utility of the learned MAD embeddings, we evaluated the performance of our algorithms and baselines on a downstream goal-reaching task in the OGBench PointMaze and Antmaze environments. 

\subsection*{Planning algorithm}
    We employed a simple planning algorithm based on random shooting, a form of model-predictive control (MPC), which allows for a direct evaluation of the distance metric as a planning heuristic. This approach isolates the effectiveness of the learned metric from confounding factors that would be introduced by more complex planners.
    
    The planning process at each time step $t$, given a current state $s_t$ and a goal state $g$, is as follows:
    \begin{enumerate}[leftmargin=*,label=\arabic*.]
        \item We generate $K=100$ candidate action sequences, each of length $H$. The method of generation depends on the environment complexity:

        \begin{itemize}
            \item \textbf{PointMaze:} We sample action sequences uniformly at random on a discretized action space and use the environment simulator to roll out the resulting state trajectories.

            \item \textbf{AntMaze:} Due to the high-dimensional action space and complex dynamics, random action sampling rarely produces coordinated locomotion. Instead, we select $K$ trajectory segments from the \textit{navigate} dataset. Specifically, we identify the state in the dataset closest to the current state $s_t$ using the learned distance $d_\theta$ and extract K subsequent trajectory segments originating from that neighbourhood.
        \end{itemize}
        \item For each of the $K$ action sequences, use the true environment simulator to roll out the corresponding state trajectory $\{s_{t+1}, \dots, s_{t+H}\}$.
        \item Score each trajectory by finding the state within it that minimizes the learned distance to the goal. The score for a trajectory is given by $\min_{0 < i \le H} d_\theta(s_{t+i}, g)$, where $d_\theta$ is the learned distance.
        \item Identify the action sequence that achieved the minimum score (i.e., the one that brought the agent closest to the goal).
        \item Execute the first action from this best-scoring sequence to transition to the next state, $s_{t+1}$.
    \end{enumerate}
    This entire process is repeated at each step in a receding-horizon fashion until the agent reaches the goal or a maximum episode length is exceeded.
    
    Our choice of this simple planning framework is deliberate. By relying on the true simulator and action sampling, the success of the planner depends directly on the metric's ability to provide a meaningful and accurate signal for progress toward the goal. This avoids confounding the evaluation with inaccuracies that might arise from a learned dynamics model or the complexities of a separate policy optimization algorithm.
    
    It is important to note the limitations of this planner: because actions are sampled randomly, the resulting trajectories are sub-optimal and tend to explore only a local region around the agent's current state. Therefore, success in these long-horizon tasks relies heavily on the learned metric providing a consistent and reliable global signal toward the goal, guiding the planner effectively despite its limited local search.

\subsection*{Evaluation Protocol}

    Each task in OGBench accompanies five pre-defined state-goal pairs for evaluation. To ensure statistical robustness, we evaluate over $5$ independent random seeds. For each seed and each of the five state-goal pairs, we run $50$ evaluation episodes, each with slightly randomized initial and goal states. Performance, as reported in Table \ref{tab:pointmaze_results}, is measured by the average success rate across all episodes. An episode is considered successful if the agent reaches a state within a small Euclidean distance of the goal coordinates.


\begin{thebibliography}{43}
\providecommand{\natexlab}[1]{#1}
\providecommand{\url}[1]{\texttt{#1}}
\expandafter\ifx\csname urlstyle\endcsname\relax
  \providecommand{\doi}[1]{doi: #1}\else
  \providecommand{\doi}{doi: \begingroup \urlstyle{rm}\Url}\fi

\bibitem[Sutton and Barto(1998)]{sutton1998reinforcement}
R.~S. Sutton and A.~G. Barto.
\newblock \emph{{Reinforcement learning: An Introduction}}.
\newblock MIT Press, Cambridge, 1998.

\bibitem[Wang et~al.(2023{\natexlab{a}})Wang, Torralba, Isola, and Zhang]{wang2023optimal}
T.~Wang, A.~Torralba, P.~Isola, and A.~Zhang.
\newblock Optimal goal-reaching reinforcement learning via quasimetric learning.
\newblock In \emph{Proceedings of the 40th International Conference on Machine Learning}, pages 36411--36430. PMLR, 2023{\natexlab{a}}.

\bibitem[Park et~al.(2023)Park, Ghosh, Eysenbach, and Levine]{park2023hiql}
S.~Park, D.~Ghosh, B.~Eysenbach, and S.~Levine.
\newblock {HIQL}: offline goal-conditioned {RL} with latent states as actions.
\newblock In \emph{Advances in Neural Information Processing Systems}, volume~36, pages 34866--34891. Curran Associates, Inc., 2023.

\bibitem[Steccanella and Jonsson(2022)]{steccanella2022state}
L.~Steccanella and A.~Jonsson.
\newblock State representation learning for goal-conditioned reinforcement learning.
\newblock In \emph{Joint European Conference on Machine Learning and Knowledge Discovery in Databases}, pages 84--99. Springer, 2022.

\bibitem[Park et~al.(2024{\natexlab{a}})Park, {\noopsort{a}}Rybkin, and Levine]{park2024metra}
S.~Park, O.~{\noopsort{a}}Rybkin, and S.~Levine.
\newblock {METRA}: scalable unsupervised {RL} with metric-aware abstraction.
\newblock In \emph{Proceedings of the 12th International Conference on Learning Representations}, 2024{\natexlab{a}}.

\bibitem[Park et~al.(2024{\natexlab{b}})Park, {\noopsort{b}}Tobias Kreiman, and Levine]{park2024foundationpolicieshilbertrepresentations}
S.~Park, {\noopsort{b}}Tobias Kreiman, and S.~Levine.
\newblock Foundation policies with {H}ilbert representations.
\newblock In \emph{Proceedings of the 41st International Conference on Machine Learning}, pages 39737--39761. PMLR, 2024{\natexlab{b}}.

\bibitem[Ghosh et~al.(2020)Ghosh, Gupta, Reddy, Fu, Devin, Eysenbach, and Levine]{ghosh2020learning}
D.~Ghosh, A.~Gupta, A.~Reddy, J.~Fu, C.~M. Devin, B.~Eysenbach, and S.~Levine.
\newblock Learning to reach goals via iterated supervised learning.
\newblock In \emph{Proceedings of the 8th International Conference on Learning Representations}, 2020.

\bibitem[Tarbouriech et~al.(2021)Tarbouriech, Zhou, Du, Pirotta, Valko, and Lazaric]{tarbouriech2021ssp}
J.~Tarbouriech, R.~Zhou, S.~S. Du, M.~Pirotta, M.~Valko, and A.~Lazaric.
\newblock Stochastic shortest path: minimax, parameter-free and towards horizon-free regret.
\newblock In \emph{Advances in Neural Information Processing Systems}, volume~34, pages 6843--6855. Curran Associates, Inc., 2021.

\bibitem[Kaelbling(1993)]{kaelbling1993temporaldist}
L.~P. Kaelbling.
\newblock Learning to achieve goals.
\newblock \emph{International Joint Conference on Artificial Intelligence}, 2:\penalty0 1094--1098, 1993.

\bibitem[Hartikainen et~al.(2020)Hartikainen, Geng, Haarnoja, and Levine]{hartikainen2020}
K.~Hartikainen, X.~Geng, T.~Haarnoja, and S.~Levine.
\newblock Dynamical distance learning for semi-supervised and unsupervised skill discovery.
\newblock In \emph{Proceedings of the 8th International Conference on Learning Representations}, 2020.

\bibitem[Ma et~al.(2022)Ma, Sodhani, Jayaraman, Bastani, Kumar, and Zhang]{ma2022vip}
Y.~J. Ma, S.~Sodhani, D.~Jayaraman, O.~Bastani, V.~Kumar, and A.~Zhang.
\newblock {VIP}: towards universal visual reward and representation via value-implicit pre-training.
\newblock In \emph{Proceedings of the 11th International Conference on Learning Representations}, 2022.

\bibitem[Wang and Isola(2022)]{wang2022improved}
T.~Wang and P.~Isola.
\newblock Improved representation of asymmetrical distances with interval quasimetric embeddings.
\newblock In \emph{NeurIPS Workshop on Symmetry and Geometry in Neural Representations}, 2022.

\bibitem[Dadashi et~al.(2021)Dadashi, Rezaeifar, Vieillard, Hussenot, Pietquin, and Geist]{dadashi2021offline}
R.~Dadashi, S.~Rezaeifar, N.~Vieillard, L.~Hussenot, O.~Pietquin, and M.~Geist.
\newblock Offline reinforcement learning with pseudometric learning.
\newblock In \emph{Proceedings of the 38th International Conference on Machine Learning}, pages 2307--2318. PMLR, 2021.

\bibitem[Agarwal et~al.(2021)Agarwal, Machado, Castro, and Bellemare]{agarwal2021}
R.~Agarwal, M.~C. Machado, P.~S. Castro, and M.~G. Bellemare.
\newblock Contrastive behavioral similarity embeddings for generalization in reinforcement learning.
\newblock In \emph{Proceedings of the 9th International Conference on Learning Representations}, 2021.

\bibitem[Dayan(1993)]{dayan1993sr}
P.~Dayan.
\newblock Improving generalization for temporal-difference learning: the successor representation.
\newblock \emph{Neural Computation}, 5\penalty0 (4):\penalty0 613--624, 1993.

\bibitem[Barreto et~al.(2017)Barreto, Dabney, Munos, Hunt, Schaul, van Hasselt, and Silver]{barreto2017sf}
A.~Barreto, W.~Dabney, R.~Munos, J.~J. Hunt, T.~Schaul, H.~P. van Hasselt, and D.~Silver.
\newblock Successor features for transfer in reinforcement learning.
\newblock In \emph{Advances in Neural Information Processing Systems}, volume~30, pages 4055--4065. Curran Associates, Inc., 2017.

\bibitem[Eysenbach et~al.(2022)Eysenbach, Zhang, Levine, and Salakhutdinov]{eysenbach2022contrastive}
B.~Eysenbach, T.~Zhang, S.~Levine, and R.~R. Salakhutdinov.
\newblock Contrastive learning as goal-conditioned reinforcement learning.
\newblock In \emph{Advances in Neural Information Processing Systems}, volume~35, pages 35603--35620. Curran Associates, Inc., 2022.

\bibitem[Myers et~al.(2024)Myers, Zheng, Dragan, Levine, and Eysenbach]{myers2024tcsf}
V.~Myers, C.~Zheng, A.~Dragan, S.~Levine, and B.~Eysenbach.
\newblock Learning temporal distances: contrastive successor features can provide a metric structure for decision-making.
\newblock In \emph{Proceedings of the 41st International Conference on Machine Learning}, pages 37076--37096. PMLR, 2024.

\bibitem[Wu et~al.(2019)Wu, Tucker, and Nachum]{wu2018}
Y.~Wu, G.~Tucker, and O.~Nachum.
\newblock The {L}aplacian in {RL}: learning representations with efficient approximations.
\newblock In \emph{Proceedings of the 7th International Conference on Learning Representations}, 2019.

\bibitem[Machado(2019)]{Machado2019}
M.~C. Machado.
\newblock \emph{{Efficient Exploration in Reinforcement Learning through Time-Based Representations}}.
\newblock PhD thesis, University of Alberta, 2019.

\bibitem[Wang et~al.(2021)Wang, Zhou, Zhang, Shao, Hooi, and Feng]{wang2021}
K.~Wang, K.~Zhou, Q.~Zhang, J.~Shao, B.~Hooi, and J.~Feng.
\newblock Towards better {L}aplacian representation in reinforcement learning with generalized graph drawing.
\newblock In \emph{Proceedings of the 38th International Conference on Machine Learning}, volume 139, pages 11003--11012. PMLR, 2021.

\bibitem[Wang et~al.(2023{\natexlab{b}})Wang, Zhou, Feng, Hooi, and Wang]{wang2023}
K.~Wang, K.~Zhou, J.~Feng, B.~Hooi, and X.~Wang.
\newblock Reachability-aware {L}aplacian representation in reinforcement learning.
\newblock In \emph{Proceedings of the 40th International Conference on Machine Learning}, volume 202, pages 36670--36693. PMLR, 2023{\natexlab{b}}.

\bibitem[Bellman(1957)]{bellman1957}
R.~Bellman.
\newblock A {M}arkovian decision process.
\newblock \emph{Journal of Mathematics and Mechanics}, 6\penalty0 (5):\penalty0 679--684, 1957.

\bibitem[Sutton et~al.(1999)Sutton, Precup, and Singh]{sutton1999between}
R.~S. Sutton, D.~Precup, and S.~Singh.
\newblock Between {MDP}s and semi-{MDP}s: a framework for temporal abstraction in reinforcement learning.
\newblock \emph{Artificial intelligence}, 112\penalty0 (1-2):\penalty0 181--211, 1999.

\bibitem[Sutton and Barto(2018)]{sutton2018}
R.~S. Sutton and A.~G. Barto.
\newblock \emph{Reinforcement Learning: An Introduction}.
\newblock MIT Press, Cambridge, MA, 2018.

\bibitem[Yoon et~al.(2007)Yoon, Fern, and Givan]{yoon2007}
S.~Yoon, A.~Fern, and R.~Givan.
\newblock {FF}-{R}eplan: a baseline for probabilistic planning.
\newblock In \emph{Proceedings of the 5th International Conference on Automated Planning and Scheduling (ICAPS)}, pages 352--359, 2007.

\bibitem[Floyd(1962)]{floyd}
R.~W. Floyd.
\newblock Algorithm 97: shortest path.
\newblock \emph{Communications of the ACM}, 5\penalty0 (6):\penalty0 345, 1962.
\newblock ISSN 0001-0782.

\bibitem[Warshall(1962)]{warshall}
S.~Warshall.
\newblock A theorem on {B}oolean matrices.
\newblock \emph{Journal of the ACM}, 9\penalty0 (1):\penalty0 11--12, 1962.
\newblock ISSN 0004-5411.

\bibitem[Pitis et~al.(2020)Pitis, Chan, Jamali, and Ba]{pitis2020inductive}
S.~Pitis, H.~Chan, K.~Jamali, and J.~Ba.
\newblock An inductive bias for distances: neural nets that respect the triangle inequality.
\newblock In \emph{Proceedings of the 8th International Conference on Learning Representations}, 2020.

\bibitem[Fu et~al.(2020)Fu, Kumar, Nachum, Tucker, and Levine]{fu2021d4rldatasetsdeepdatadriven}
J.~Fu, A.~Kumar, O.~Nachum, G.~Tucker, and S.~Levine.
\newblock {D4RL}: datasets for deep data-driven reinforcement learning.
\newblock \emph{arXiv preprint arXiv:2004.07219}, 2020.

\bibitem[Park et~al.(2024{\natexlab{c}})Park, Frans, Eysenbach, and Levine]{park2024ogbench}
S.~Park, K.~Frans, B.~Eysenbach, and S.~Levine.
\newblock {OGBench}: benchmarking offline goal-conditioned {RL}.
\newblock \emph{arXiv preprint arXiv:2410.20092}, 2024{\natexlab{c}}.

\bibitem[Loshchilov and Hutter(2019)]{loshchilov2018adamw}
I.~Loshchilov and F.~Hutter.
\newblock Decoupled weight decay regularization.
\newblock In \emph{Proceedings of the 7th International Conference on Learning Representations}, 2019.

\bibitem[Klambauer et~al.(2017)Klambauer, Unterthiner, Mayr, and Hochreiter]{klambauer2017self}
G.~Klambauer, T.~Unterthiner, A.~Mayr, and S.~Hochreiter.
\newblock Self-normalizing neural networks.
\newblock In \emph{Advances in Neural Information Processing Systems}, volume~30. Curran Associates, Inc., 2017.

\bibitem[Loshchilov and Hutter(2017)]{loshchilov2017decoupled}
I.~Loshchilov and F.~Hutter.
\newblock Decoupled weight decay regularization.
\newblock \emph{arXiv preprint arXiv:1711.05101}, 2017.

\bibitem[Glorot et~al.(2011)Glorot, Bordes, and Bengio]{glorot2011deep}
X.~Glorot, A.~Bordes, and Y.~Bengio.
\newblock Deep sparse rectifier neural networks.
\newblock In \emph{Proceedings of the Fourteenth International Conference on Artificial Intelligence and Statistics}, pages 315--323. PMLR, 2011.

\bibitem[Kingma and Ba(2015)]{kingma2015adam}
D.~P. Kingma and J.~Ba.
\newblock Adam: a method for stochastic optimization.
\newblock In \emph{Proceedings of the 3rd International Conference on Learning Representations}. PMLR, 2015.

\bibitem[Andrychowicz et~al.(2017)Andrychowicz, Wolski, Ray, Schneider, Fong, Welinder, McGrew, Tobin, Pieter~Abbeel, and Zaremba]{andrychowicz2017hindsight}
M.~Andrychowicz, F.~Wolski, A.~Ray, J.~Schneider, R.~Fong, P.~Welinder, B.~McGrew, J.~Tobin, O.~Pieter~Abbeel, and W.~Zaremba.
\newblock Hindsight experience replay.
\newblock \emph{Advances in neural information processing systems}, 30, 2017.

\bibitem[Newey and Powell(1987)]{newey1987expectile}
W.~K. Newey and J.~L. Powell.
\newblock Asymmetric least squares estimation and testing.
\newblock \emph{Econometrica: Journal of the Econometric Society}, pages 819--847, 1987.

\bibitem[Kostrikov et~al.(2022)Kostrikov, Nair, and Levine]{kostrikov2022iql}
I.~Kostrikov, A.~Nair, and S.~Levine.
\newblock Offline reinforcement learning with implicit {Q}-learning.
\newblock In \emph{Proceedings of the 10th International Conference on Learning Representations}, 2022.

\bibitem[Mnih et~al.(2015)Mnih, Kavukcuoglu, Silver, Rusu, Veness, Bellemare, Graves, Riedmiller, Fidjeland, and Ostrovski]{mnih2015dqn}
V.~Mnih, K.~Kavukcuoglu, D.~Silver, A.~A. Rusu, J.~Veness, M.~G. Bellemare, A.~Graves, M.~Riedmiller, A.~K. Fidjeland, and G.~Ostrovski.
\newblock Human-level control through deep reinforcement learning.
\newblock \emph{Nature}, 518\penalty0 (7540):\penalty0 529--533, 2015.

\bibitem[Van~Hasselt et~al.(2016)Van~Hasselt, Guez, and Silver]{van2016doubledqn}
H.~Van~Hasselt, A.~Guez, and D.~Silver.
\newblock Deep reinforcement learning with double {Q}-learning.
\newblock In \emph{Proceedings of the AAAI conference on artificial intelligence}, volume~30, 2016.

\bibitem[Hendrycks and Gimpel(2016)]{hendrycks2016gelu}
D.~Hendrycks and K.~Gimpel.
\newblock Gaussian error linear units ({GELU}s).
\newblock \emph{arXiv preprint arXiv:1606.08415}, 2016.

\bibitem[Jiang et~al.(2022)Jiang, Zhang, Janner, Li, Rockt{\"a}schel, Grefenstette, and Tian]{jiang2022efficient}
Z.~Jiang, T.~Zhang, M.~Janner, Y.~Li, T.~Rockt{\"a}schel, E.~Grefenstette, and Y.~Tian.
\newblock Efficient planning in a compact latent action space.
\newblock \emph{arXiv preprint arXiv:2208.10291}, 2022.

\end{thebibliography}
\end{document}